\documentclass[11pt]{article}

\usepackage{natbib}
\usepackage{fullpage}

\usepackage{environ}

\usepackage[utf8]{inputenc} %
\usepackage[T1]{fontenc}    %
\usepackage{hyperref}       %
\usepackage{url}            %
\usepackage{booktabs}       %
\usepackage{amsfonts}       %
\usepackage{nicefrac}       %
\usepackage{microtype}      %
\usepackage[ruled,vlined,noend]{algorithm2e}
\usepackage{amsmath,amssymb,amsfonts}
\usepackage{amsthm}
\usepackage{graphicx}
\usepackage{xcolor}
\SetKw{Continue}{continue}
\SetKw{Break}{break}

\newtheorem{prop}{Proposition}

\newtheorem{thm}{Theorem}
\newtheorem{result}{Result}

\newcommand{\tf}{\mathbf}
\newcommand{\eps}{\varepsilon}
\renewcommand{\ker}{\kappa}

\newcommand{\calR}{\mathcal{R}}
\newcommand{\lone}{{\mathcal L_1}}

\newcommand{\R}{\mathbb{R}}
\newcommand{\E}{\mathbb{E}}

\newcommand{\st}{\mathrm{s.t.}}
\newcommand{\reff}{\mathrm{ref}}
\newcommand{\train}{\mathrm{train}}

\newcommand{\tfxw}{\tf{\Phi}_{\tf x\tf w}}
\newcommand{\tfxiw}{\tf{\Phi}_{\pmb \xi\pmb \omega}}
\newcommand{\tfuwxi}{\tf{\Phi}_{\tf u\pmb \omega}}
\newcommand{\tfxin}{\tf{\Phi}_{\pmb \xi\tf n}}
\newcommand{\tfx}{\tf{\Phi}_{\tf x}}
\newcommand{\tfxr}{\tf{\Phi}_{\tf x\tf r}}
\newcommand{\tfxn}{\tf{\Phi}_{\tf x\tf n}}
\newcommand{\tfuw}{\tf{\Phi}_{\tf u\tf w}}
\newcommand{\tfur}{\tf{\Phi}_{\tf u\tf r}}
\newcommand{\tfun}{\tf{\Phi}_{\tf u\tf n}}
\newcommand{\rowtf}{\left\{\tfxr,\tfxn,\tfur,\tfun \right\}}
\newcommand{\rowtfw}{\left\{\tfxw,\tfxn,\tfuw,\tfun \right\}}
\newcommand{\rowtfxi}{\left\{\tfxiw,\tfxin,\tfuwxi,\tfun \right\}}
\newcommand{\fulltf}{\begin{bmatrix} \tfxw &\tfxn  \\ \tfuw &\tfun  \end{bmatrix}}
\newcommand{\fulltfxi}{\begin{bmatrix} \tfxiw &\tfxin  \\ \tfuwxi &\tfun  \end{bmatrix}}

\newcommand{\norm}[1]{\lVert #1 \rVert}
\newcommand{\bignorm}[1]{\left\lVert #1 \right\rVert}

\newcounter{relctr} %
\everydisplay\expandafter{\the\everydisplay\setcounter{relctr}{0}} %

\newcommand\labelrel[2]{%
  \begingroup
    \refstepcounter{relctr}%
    \stackrel{\textnormal{(\alph{relctr})}}{\mathstrut{#1}}%
    \originallabel{#2}%
  \endgroup
}
\AtBeginDocument{\let\originallabel\label} %
 \renewenvironment{proof}[1][\proofname]{{\noindent\bfseries #1. }}{\hfill$\square$}

\theoremstyle{definition}
\newtheorem{example}{Example}
\newtheorem{lemma}{Lemma}
\newtheorem{corollary}{Corollary}
\usepackage[labelfont=bf]{subcaption}
\usepackage[labelfont=bf]{caption}

\usepackage{lmodern}

\newif\ifpreprint
\preprinttrue

\title{Certainty Equivalent Perception-Based Control}

\author{
  Sarah Dean and Benjamin Recht \\
  Department of EECS, University of California, Berkeley
}
\date{}

\begin{document}

\maketitle

\begin{abstract}
In order to certify performance and safety, feedback control requires precise characterization of sensor errors.
In this paper, we provide guarantees on such feedback systems when sensors are characterized by solving a supervised learning problem.
We show a uniform error bound on nonparametric kernel regression under a dynamically-achievable dense sampling scheme.
This allows for a finite-time convergence rate on the sub-optimality of using the regressor in closed-loop for waypoint tracking.
We demonstrate our results in simulation with simplified unmanned aerial vehicle and autonomous driving examples.
\end{abstract}

\section{Introduction}
Machine learning provides a promising avenue for incorporating rich sensing modalities into autonomous systems.
However, our coarse understanding of how ML systems fail limits the adoption of data-driven techniques in real-world applications.
In particular, applications involving feedback require that errors do not accumulate and lead to instability.
In this work, we propose and analyze a baseline method for incorporating a learning-enabled component into closed-loop control, providing bounds on the sample complexity of a reference tracking problem.

Much recent work on developing guarantees for learning and control has focused on the case that dynamics are unknown~\citep{dean2017sample,simchowitz2020naive,mania2020active}.
In this work, we consider a setting in which the uncertainty comes from the sensor generating observations about system state.
By considering unmodeled, nonlinear, and  potentially high dimensional  sensors, we capture phenomena relevant to modern autonomous systems~\citep{codevilla2018end,lambert2018deep,tang2018aggressive}.

Our analysis combines contemporary techniques from statistical learning theory and robust control. 
We consider learning an inverse perception map with noisy measurements as supervision, and show that it is \emph{necessary} to quantify the uncertainty pointwise to guarantee robustness.
Such pointwise guarantees are more onerous than  typical mean-error generalization bounds, and require a number of samples scaling exponentially with dimension.
However, many interesting problems in robotics and automation are low dimensional, and we provide a high-probability pointwise error bound on nonparametric regression for such scenarios. 
Under a dynamically feasible dense sampling scheme, we show uniform converge for the learned map.
Finally, we analyze the suboptimality of using the learned component in closed-loop, and demonstrate the utility of our method for reference tracking.
We close with several numerical examples showing that our method is feasible for many problems of interest in autonomy.
Full proofs of the main results, further discussion on controller synthesis, illustrative examples, and experimental details are included in the longer version of this paper~\citep{dean2020full}.

\subsection{Problem Setting}

Motivated by situations in which control is difficult due to sensing, we consider the task of  waypoint tracking for a system with known, linear dynamics and complex, nonlinear observations.
The setting is succinctly described by the motivating optimal control problem:
\begin{align}
\min_{\pi}~~ &
\sup_{\substack{\{x^\reff_k\}\in \calR\\ \|x_0\|_\infty\leq \sigma_0}}
\left\|\begin{bmatrix}Q^{1/2}(x_k-x^\reff_k) \\ R^{1/2}u_k\end{bmatrix}\right\|_\infty \label{eq:ocp_cost} \\
\st~~ & x_{k+1} = Ax_k + Bu_k ,~~ z_k = g(C x_k),~~ u_k = \pi(z_{0:k}, x^\reff_{0:k}),\label{eq:ocp_dynamics_obs_control}
\end{align}
where $x\in\R^n$ is the system state, $u\in\R^m$ is the control input, 
$z\in\R^q$ is the system observation, and we use the shorthand $x_{0:k} = (x_0, \dots, x_k)$. The linear dynamics are specified by $A\in\R^{n\times n}$, $B\in\R^{n\times m}$, and the matrix $C\in\R^{p\times n}$ determines the state subspace that affects the observation, which we will refer to as the \emph{measurement subspace}. 
We assume that $(A,B)$ is controllable and $(A,C)$ is observable.
This optimal control problem seeks a causal policy $\pi$ to minimize a robust waypoint tracking cost. 
The objective in~\eqref{eq:ocp_cost} penalizes the maximum deviation from any reference trajectory in the class $\calR$ as well as the maximum control input, with the relative importance of these terms determined by $Q\in\R^{n\times n}$ and $R\in\R^{m\times m}$.
A common class of reference signal is those with bounded differences, i.e. $\|x_k^\reff - x_{k+1}^\reff\|_\infty \leq \Delta$.

As defined by the constraints of this problem, the system has linear dynamics, but nonlinear observations~\eqref{eq:ocp_dynamics_obs_control}.
We suppose that the parameters of the linear dynamics $(A,B,C)$ are known, while the observation function $g:\R^p\to\R^q$ is unknown.
This emulates a natural setting in which the dynamics of a system are well-specified from physical principles (e.g. vehicle dynamics), while observations from the system (e.g. camera images) are complex but encode a subset of the state (e.g. position).
We assume that $g$ is continuous and that there is a continuous inverse function $h:\R^q\to \R^p$ with $h(g(y)) = y$.
For the example of a dashboard mounted camera, such an inverse exists whenever each camera pose corresponds to a unique image, which is a reasonable assumption in sufficiently feature rich environments.

If the inverse function $h$ were known, then the optimal control problem is equivalent to a linear output-feedback control
problem. 
\ifpreprint
We expand further on this point in Section~\ref{app:reference_tracking}.
\else
\fi 
We therefore pose a learning problem focused on the unknown inverse function $h$, which we will call a \emph{perception map}.
We suppose that during the training phase, there is an additional system output, 
\begin{align}\label{eq:noisy_sensor}
y_k^\train = Cx_k + \eta_k
\end{align}
where each $\eta_k$ is zero mean and has independent entries bounded by $\sigma_\eta$.
This assumption corresponds to using a simple but noisy sensor for characterizing the complex sensor.
The noisy system output will both supervise the learning problem and allow for the execution of a sampling scheme where the system is driven to sample particular parts of the state space.
Notice that due to its noisiness, using this sensor for waypoint tracking would be suboptimal compared using transformed observations. %

We will show that the \emph{certainty equivalent} controller, i.e. the controller which treats the approximation $\widehat h(z) \approx Cx$ as true, achieves a cost which converges towards optimality.
\begin{result}[Informal]
Certainty equivalent control using a perception map learned with $T$ sampled data points by our method achieves a suboptimality bounded by
\begin{align}
c(\widehat\pi) - c(\pi_\star)\lesssim L r_\star S_\star \left( \frac{\sigma}{T}\right)^{\frac{1}{p+4}}\:.
\end{align}
Here, $c(\cdot)$ is the cost objective in~\eqref{eq:ocp_cost}, $L$ describes the continuity of the relationship between system state and the observations, $r_\star$ bounds the region of system operation under the optimal control law, $S_\star$ is the sensitivity to measurement error of the optimal closed-loop, $\sigma$ is proportional to the sensor noise in~\eqref{eq:noisy_sensor}, and $p$ is the dimension of measurement subspace.
\end{result}

\subsection{Related Work}

The model we study is inspired by examples of control from pixels, primarily in the domain of robotics.
At one end of the spectrum, calibrated camera characteristics are incorporated directly into physical models; this type of visual-inertial odometry enables agressive control maneuvers~\citep{tang2018aggressive}.
On the other end are policies that map pixels directly to low-level control inputs, for example via imitation learning~\citep{codevilla2018end}.
Our model falls between these two approaches: relying on known system dynamics, but deferring sensor characterization to data.

The observation of a linear system through a static nonlinearity is classically studied as a \emph{Weiner system} model~\citep{schoukens2016identification}.
While there are identification results for Weiner systems,
they apply only to single-input-single-output systems, and often require
assumptions that do not apply to the motivation of cameras~\citep{hasiewicz1987identification,wigren1994convergence,greblicki1997nonparametric,tiels2014wiener}.
More flexible identification schemes have been proposed~\citep{lacy2001subspace,salhi2016combined}, but
they lack theoretical guarantees.
Furthermore, these approaches focus on identifying the full forward model, which may not be necessary for good closed-loop performance.
The variant of Weiner systems that we study is closely related to
our recent work, which focused on robustness in control design
~\citep{dean2019robust}.
We now extend these ideas to directly consider issues of sampling and noise.

There is much related and recent work at the intersection of learning and control.
Similar in spirit is a line of work on the Linear Quadratic Regulator which focuses on issues of system identification and sub-optimality~\citep{dean2017sample,dean2018regret,abbasi2011regret}.
This style of sample complexity analysis has allowed for illuminating comparisons between model-based and model-free policy learning approaches~\citep{tu2018gap,abbasi2019model}.
\citet{mania2019certainty,simchowitz2020naive} show that the simple strategy of model-based certainty equivalent control is efficient, though the argument is specialized to linear dynamics and quadratic cost.
For nonlinear systems, analyses of learning often focus on ensuring safety over identification or sub-optimality~\citep{taylor2019episodic,berkenkamp2017safe,wabersich2018linear,cheng2019end}, and rely on underlying smoothness for their guarantees~\citep{liu2019robust,nakka2020chance}.
An exception is a recent result by~\citet{mania2020active}
which presents finite sample guarantees for parametric nonlinear system identification.

While the majority of work on learning for control focuses on settings with full state observation, output feedback is receiving growing attention for linear systems~\citep{simchowitz2019learning,simchowitz2020improper} and for safety-critical systems~\citep{laine2020eyes}. Recent work in closely related problem settings includes~\citet{mhammedi2020learning}, who develop sample complexity guarantees for LQR with nonlinear observations and~\cite{misra2020kinematic}, who leverage representation learning in Block MDPs; however, neither address issues of stability due to focusing on finite horizon problems.

The statistical analysis presented here focuses on nonparametric pointwise error bounds over a compact set.
Distinct from mean-error generalization arguments most common in learning theory,
our analysis is directly related to classical statistical results on uniform convergence~\citep{devroye1978uniform,liero1989strong,hansen2008uniform}.
Our motivation is related to conformal regression~\citep{lei2014distribution,barber2019limits}, which relies on exchangeability assumptions that can be adapted to data from dynamical systems with mixing arguments~\citep{chernozhukov2018exact}.
\section{Uniform Convergence of Perception}

In this section, we introduce a sampling scheme and nonparametric regression strategy for learning the predictor $\widehat h$ and show that the resulting errors are uniformly bounded.
\ifpreprint
While it is typical in the machine learning community to consider mean-error generalization bounds,
we show in the following example that it is necessary to quantify the uncertainty pointwise to guarantee robustness.
This motivating example shows how errors within sets of vanishingly small measure can cause systems to exit bounded regions of well-characterized perception and lead to instability.
For a simple setting, we construct a reference trajectory that leads to instability, illustrating that feedback control objectives require very strong function approximation error bounds.

\begin{example}
Consider a one-dimensional linear system that is open-loop unstable ($|a|>1$) with
an arbitrary linear reference tracking controller: 
\[x_{k+1} = a x_k + u_k,\quad 
u_k = \sum_{t=0}^k K_t^\reff x^\reff_{k-t} +K_t^x \widehat x_{k-t} .\]
Further suppose that the perception map has been characterized within the bounded interval $[-r,r]$ and is imperfect only around the point $\bar x\in[-r,r]$:
\ifpreprint
\[\widehat x = \widehat h(g(x)) =\begin{cases}  0 &|x|\geq r~~\text{or}~~ x=\bar x\\
x&\text{otherwise}\end{cases}\]
\else
\[\widehat x = \widehat h(g(x)) =  0 ~~\text{if}~~ |x|\geq r~~\text{or}~~ x=\bar x ~~\text{and}~~ x~~\text{otherwise}.\]
\fi
Then consider a trajectory starting at $x_0=0$ with $x^\reff_0 = \frac{\bar x}{K_0^r}$, $x^\reff_1 = \frac{1}{K_0^r}\big(r-(a+\tfrac{K_1^r}{K_0^r})\bar x\big)$, and
$x^\reff_k=0$ for all $k\geq 2$.
Despite an initial condition and reference trajectory contained within the well-characterized interval, the perception error at $\bar x$ causes  the unstable system 
trajectory
\[|x_k| = |a^{k-1}\bar x| \to \infty~\text{as}~k\to\infty\:.\]
\end{example}

Motivated by this example, we begin by introducing a method for nonparametric regression and present a data-dependent pointwise error bound.
Then, we propose a dense sampling scheme and show a uniform error bound over the sampled region of the measurement subspace.
\else
While it is typical in the machine learning community to consider mean-error generalization bounds, robust control objectives require that uncertainty be quantified pointwise to guarantee.
It is easy to construct examples in which 
errors within sets of vanishingly small measure cause systems to exit bounded regions of well-characterized perception and lead to instability (e.g.~\cite{dean2020full}).

Therefore, we begin by introducing a method for nonparametric regression and present a data-dependent pointwise error bound.
Then, we propose a dense sampling scheme and show a uniform error bound over the sampled region of the measurement subspace.
\fi

\subsection{Nonparametric Regression}\label{sec:regression}

Since uniform error bounds are necessary for robust guarantees, we now introduce a method to learn perception maps with such bounds.
For simplicity of analysis and exposition, we focus on Nadarya-Watson estimators.
We expect that our insights will generalize to more complex techniques,
and we demonstrate similarities with additional regressors in simulation experiments presented in Section~\ref{sec:experiments}.

The Nadarya-Watson regression estimators with training data $\{(z_t, y_t^\train)\}_{t=0}^T$, bandwidth $\gamma \in\R_+$, and metric $\rho:\R^q\times\R^q\to\R_+$ have the form
\begin{align}
\begin{split}\label{eq:regression_estimator}
\widehat h(z) = \sum_{t=0}^T \frac{\ker_\gamma(z_t,z)}{s_T(z)} y^\train_t ,\quad s_T(z) = \sum_{t=0}^T \ker_\gamma\left(z_t,z\right),\quad
\ker_\gamma\left(z_t,z\right) = \ker\left(\tfrac{\rho(z_t,z)}{\gamma}\right) \:,
\end{split}
\end{align}
with $\widehat h (z) = 0$ when $s_T(z)=0$ and $\ker:\R_+\to[0,1]$ is a kernel function.
We assume that the kernel function is Lipschitz with parameter $L_\ker$ and that $\ker(u)=0$ for $u>1$, and define the quantity
$V_\ker = \int_{ \R^p_+}  \ker\left(\|y\|_\infty\right) d y$. 
Thus, predictions are made by computing a weighted average over the labels $y_t^\train$ of training datapoints whose corresponding observations $z_t$ are close to the current observation, as measured by the metric $\rho$.
We assume the functions $g$ and $h$ are Lipschitz continuous with respect to $\rho$, i.e. for some $L_g$ and $L_h$
\begin{align}\label{eq:smooth_assumption}
\rho(g(y), g(y')) \leq L_g \|y-y'\|_\infty,\quad
\|h(z) - h(z')\|_\infty \leq L_h\rho(z, z')\:. %
\end{align}
While our final sub-optimality results depend on $L_g$ and $L_h$, the perception map and synthesized controller do not need direct knowledge of these parameters.

For an arbitrary $z$ with $s_T(z)\neq 0$, the prediction error can be decomposed as
\begin{align}
\|\widehat h(z) - h(z)\|_\infty
&\leq \Big\| \sum_{t=0}^T \frac{\kappa_\gamma(z_t,z)}{s_T(z)}(Cx_t - Cx) \Big \|_\infty + \Big\| \sum_{t=0}^T \frac{\kappa_\gamma(z_t,z)}{s_T(z)} \eta_t \Big\|_\infty\:.
\end{align}
The first term is the approximation error due to finite sampling, even in the absence of noisy labels.
This term can be bounded using the continuity of the true perception map $h$.
The second term is the error due to measurement noise.
We use this decomposition to state a pointwise error bound, which can be used to provide tight data-dependent estimates on error.

\begin{lemma}\label{lem:regression_error}
For a learned perception map of the form~\eqref{eq:regression_estimator} with training data as in~\eqref{eq:noisy_sensor} collected during closed-loop operation of a system satisfying~\eqref{eq:smooth_assumption}, we have with probability at least $1-\delta$ that for a fixed $z$ with $s_T(z)\neq 0$,
\begin{align}
\|\widehat h(z) - h(z)\|_\infty &\leq \gamma L_h +  \frac{\sigma_\eta}{\sqrt{s_T(z)}} \sqrt{\log\left(p^2\sqrt{s_T(z)}/\delta\right)}\:.
\end{align}
\end{lemma}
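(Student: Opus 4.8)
The plan is to bound the two terms of the error decomposition displayed just above the statement separately: the first deterministically using the continuity of $h$, and the second by a martingale concentration argument.

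For the approximation term $\bignorm{\sum_{t=0}^T \tfrac{\kappa_\gamma(z_t,z)}{s_T(z)}\bigl(Cx_t - h(z)\bigr)}_\infty$ (writing $h(z)=Cx$ for the state $x$ with $z=g(Cx)$, so that $Cx_t=h(z_t)$ as well), I would first note that the coefficients $\kappa_\gamma(z_t,z)/s_T(z)$ are nonnegative and, since $s_T(z)=\sum_t\kappa_\gamma(z_t,z)\neq 0$, sum to one, so the term is a convex combination of the vectors $Cx_t - h(z)$. Because $\kappa(u)=0$ for $u>1$, only indices $t$ with $\rho(z_t,z)\le\gamma$ carry nonzero weight, and for any such $t$ the inverse property $h\circ g=\mathrm{id}$ together with the Lipschitz bound in~\eqref{eq:smooth_assumption} gives $\norm{Cx_t-h(z)}_\infty=\norm{h(z_t)-h(z)}_\infty\le L_h\,\rho(z_t,z)\le L_h\gamma$. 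Bounding the convex combination by its largest entry yields the $\gamma L_h$ summand.

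For the noise term I would write $\sum_{t=0}^T\tfrac{\kappa_\gamma(z_t,z)}{s_T(z)}\eta_t = N_T/s_T(z)$ with $N_T=\sum_{t=0}^T\kappa_\gamma(z_t,z)\eta_t$, and bound $\norm{N_T}_\infty$ coordinatewise. The key structural point is that the data are collected in closed loop, so $z_t=g(Cx_t)$ is a measurable function of $\eta_0,\dots,\eta_{t-1}$ only; hence each scalar multiplier $\kappa_\gamma(z_t,z)$ is predictable with respect to the noise filtration, while $\eta_t$ is conditionally mean-zero with entries in $[-\sigma_\eta,\sigma_\eta]$ and therefore conditionally $\sigma_\eta$-sub-Gaussian. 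Thus, for each output coordinate $j\in\{1,\dots,p\}$, the partial sums $\bigl(\sum_{t\le s}\kappa_\gamma(z_t,z)\,\eta_t[j]\bigr)_s$ form a martingale whose predictable quadratic variation is at most $\sigma_\eta^2\sum_t\kappa_\gamma(z_t,z)^2\le\sigma_\eta^2\sum_t\kappa_\gamma(z_t,z)=\sigma_\eta^2\,s_T(z)$, using $\kappa_\gamma\in[0,1]$. A self-normalized (Laplace / method-of-mixtures) martingale tail inequality then bounds $\bigl|\sum_t\kappa_\gamma(z_t,z)\eta_t[j]\bigr|$ by a constant multiple of $\sigma_\eta\sqrt{s_T(z)\log(\sqrt{s_T(z)}/\delta')}$; taking $\delta'=\delta/p$ and a union bound over the $p$ coordinates gives $\norm{N_T}_\infty\lesssim\sigma_\eta\sqrt{s_T(z)\log(p^2\sqrt{s_T(z)}/\delta)}$, and dividing by $s_T(z)$ produces the second summand. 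Adding the two bounds gives the claim.

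I expect the main obstacle to be precisely this coupling between the kernel weights and the noise: since the sampling trajectory reacts to past measurement errors, a plain Hoeffding bound on the normalized weighted average is not available, and one cannot simply condition on $s_T(z)$ because it is itself noise-dependent. Passing to the unnormalized martingale $N_T$ with predictable multipliers removes the coupling, and the self-normalized inequality absorbs the data-dependent variance proxy; pleasantly, it is exactly the data-dependent ``effective sample size'' in that inequality that is responsible for the $\sqrt{s_T(z)}$ appearing inside the logarithm. The remaining ingredients — convexity of the weights, the Lipschitz estimate, and the coordinate union bound — are routine.
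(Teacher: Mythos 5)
Your proposal matches the paper's proof essentially step for step: the same approximation/noise decomposition, the same kernel-support-plus-Lipschitz argument giving the $\gamma L_h$ term, and the same self-normalized martingale inequality with predictable kernel weights (the paper's Lemma~\ref{lem:sum_filtration}, adapted from \citet{abbasi2011online}) combined with $\sum_t \ker_\gamma^2 \leq s_T(z)$ and a per-coordinate union bound. The only differences are cosmetic bookkeeping of constants and the per-coordinate confidence split ($\delta/p$ versus the paper's $\delta/p^2$), which do not change the argument.
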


\ifpreprint
We present the proof of this result in the Appendix~\ref{app:proofs}.
\else
\fi
The expression illustrates that there is a tension between having a small bandwidth $\gamma$ and ensuring that the coverage term $s_T(z)$ is large.
Notice that most of the quantities in this upper bound can be readily computed from the training data; only $L_h$, which quantifies the continuity of the map from observation to state, is difficult to measure.
We remark that while useful for building intuition, the result in Lemma~\ref{lem:regression_error} is only directly applicable for bounding error at a finite number of points. 
\ifpreprint
Since our main results handle stability over infinite horizons, they rely on a modified bound introduced in Appendix~\ref{app:proofs} which is closely tied to continuity properties of the estimated perception map $\widehat h$ and the sampling scheme we propose in the next section.
\else
Since our main results handle stability over infinite horizons, they rely on a modified bound introduced in the full version of this paper~\citep{dean2020full} which is closely tied to continuity properties of the estimated perception map $\widehat h$ and the sampling scheme we propose in the next section.
\fi

\subsection{Dense Sampling}\label{sec:dense_sampling}

We now propose a method for collecting training data and show a uniform, sample-independent bound on perception errors under the proposed scheme.
This strategy relies on the structure imposed by the continuous and bijective map $g$, which ensures that driving the system along a dense trajectory in the measurement subspace corresponds to collecting dense samples from the space of observations.
In what follows, we provide a method for driving the system along such a trajectory.

We assume that during training, the system state can be reset according to a distribution $\mathcal{D}_0$ which has has support bounded by $\sigma_0$. 
We do not assume that these states are observed. 
Between resets, an affine control law drives the system to evenly sample the measurement subspace with a combination of a stabilizing output feedback controller and reference tracking inputs:
\begin{align}\label{eq:sampling_controller}
u_t = \sum_{k=0}^t K_k y_{t-k}^\train + u_t^\reff\:.
\end{align}
The stabilizing feedback controller prevents the accumulation of errors resulting from the unobserved reset state.
The closed-loop trajectories resulting from this controller are
\begin{align}\label{eq:state_lti_unrolled}
x_{t} = \Phi_x(t)x_0 + \sum_{k=1}^{t} \Phi_{xu}(k) u^\reff_{t-k}+ \Phi_{xn}(k)  \eta_{t-k}\:,
\end{align}
where the system response variables $\{\Phi_x(k), \Phi_{xu}(k), \Phi_{xn}(k)\}_{k\geq 0}$ arise from the interaction of the stabilizing control law with the linear dynamics; we revisit this fact in detail in Section~\ref{sec:system_response_waypoint}.
As long as the feedback control law $\{K_k\}_{k\geq 0}$ is chosen such that the closed-loop system is stable, the system response variables decay.
Designing such a stabilizing controller is possible since $(A,B)$ is controllable and $(A,C)$ is observable.
We therefore assume that for all $k\geq 0$
\begin{align}\label{eq:M_rho_assumption}
\max\left\{\|C\Phi_{x}(k)\|_\infty, \|C\Phi_{xn}(k)\|_\infty\right\} \leq M\rho^k,%
\end{align}
for some $M\geq 1$ and $0\leq \rho<1$.

\SetAlgoLined
\begin{algorithm}[t]
\SetAlgoLined
\KwIn{System variables $C$, stabilizing controller $\{K_k\}_{k=1}^{n}$ and system response $\{\Phi_{xu}(k)\}_{k=1}^{n}$, sampling radius $\bar r$, target dataset size $T$.}
 \For{$\ell\in\{1,\dots,T\}$}{
 	reset $x_{0,\ell}\sim \mathcal{D}_0$ and sample
 	\ifpreprint
 	 $y^\reff_\ell\sim\mathrm{Unif}(\{y\mid\|y\|_\infty \leq \bar r\})$\;
 	design inputs $\begin{bmatrix} (u_{0,\ell}^\reff)^\top ,\dots,(u_{n-1, \ell}^\reff)^\top \end{bmatrix}^\top := \begin{bmatrix} C\Phi_{xu}(1) &\dots& C\Phi_{xu}(n)\end{bmatrix}^\dagger y^\reff_\ell$\;
	\else
	$y^\reff_\ell\sim\mathrm{Unif}(\{y\mid\|y\|_\infty \leq \bar r\})$\; \\
 	design inputs $\begin{bmatrix} (u_{0,\ell}^\reff)^\top ,\dots,(u_{n-1, \ell}^\reff)^\top \end{bmatrix}^\top := \begin{bmatrix} C\Phi_{xu}(1) &\dots& C\Phi_{xu}(n)\end{bmatrix}^\dagger y^\reff_\ell$\; \\
	\fi
 	drive the system to states $x_{k+1,\ell}$ with $u_{k,\ell} = \sum_{j=0}^k K_j y^\train_{j-k, t} + u_{k,\ell}^\reff$ for $k=0,\dots,n-1$\;
 }
 \KwOut{Uniformly sampled training data 
 $\{(z_{n,\ell}, y_{n,\ell}^\train)\}_{\ell=1}^{T}=:\{(z_{t}, y_{t}^\train)\}_{t=1}^{T}$ %
 }

 \caption{Uniform Sampling with Resets}\label{alg:ref_inputs}
\end{algorithm}

The reference inputs $u_t^\reff$ are chosen to ensure even sampling.
Since the pair $(A, B)$ is controllable, these reference inputs can drive the system to any state within $n$ steps.
Algorithm~\ref{alg:ref_inputs} leverages this fact to construct control sequences which drive the system to points uniformly sampled from the measurement subspace.
Its use of system resets ensures independent samples; we note that since the closed-loop system is stable, such a ``reset'' can approximately be achieved by waiting long enough with zero control input.

As a result of the unobserved reset states and the noisy sensor, the states visited while executing
Algorithm~\ref{alg:ref_inputs} do not exactly follow the desired uniform distribution. They can be decomposed into two terms:
\[C x_{n,\ell} = {\sum_{k=1}^{n} C \Phi_{xu}(k) u^\reff_{n-k, \ell}} + \Big({C \Phi_x(n)x_{0,t} + \sum_{k=1}^{n} C \Phi_{xn}(k) \eta_{n-k, \ell}}\Big) =: {y_\ell^\reff}+{w_\ell}
\]
where $y^\reff_\ell$ is uniformly sampled from $\{y\mid\|y\|_\infty \leq \bar r\}$, and the noise variable $w_\ell$ is bounded:
\[\|w_\ell\|_\infty \leq \|C \Phi_x(n)\|_\infty \|x_0\|_\infty + \sum_{\ell=1}^{n} \|C \Phi_{xn}(\ell)\|_\infty  \|\eta_{n-\ell}\|_\infty 
\leq \frac{M\max\{\sigma_0,\sigma_\eta\}}{1-\rho}\:.\]
The following Lemma shows that uniform samples corrupted with independent and bounded noise
ensure dense sampling of the measurement subspace by providing a high probability lower bound on the coverage $s_T(z)$.

\begin{lemma}\label{lem:sT_bound_concentration}
Suppose that training data $\{z_t\}_{t=1}^T$ is collected with a stabilizing controller satisfying~\eqref{eq:M_rho_assumption} according to Algorithm~\ref{alg:ref_inputs} with $\bar r \geq r+\frac{M\max\{\sigma_0,\sigma_\eta\}}{1-\rho} + \tfrac{\gamma}{L_g}$ from a system satisfying~\eqref{eq:smooth_assumption}.
 Then for all $z$ observed from a state $x$ satisfying $\|Cx\|_\infty \leq r$,
\begin{align*}
s_T(z) \geq \frac{1}{2}\sqrt{T V_\ker} \left(\frac{\gamma}{\bar r L_g } \right)^{\frac{p}{2}}
\end{align*}
with probability at least $1-\delta$ as long as
$T  \geq 8V_\ker^{-1}\log(1/\delta) ( \bar rL_h L_g^2 )^{p}\gamma^{-p}$.
\end{lemma}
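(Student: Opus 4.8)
The plan is to lower bound $s_T(z) = \sum_{t=1}^T \kappa_\gamma(z_t, z)$ by first converting the kernel evaluations on observations into events about the sampled measurement-subspace points, then applying a concentration inequality to the resulting sum of bounded independent random variables. Fix $z$ observed from a state $x$ with $\|Cx\|_\infty \le r$, and write $v = Cx = h(z)$. For each sample $\ell$, we have $Cx_{n,\ell} = y_\ell^\reff + w_\ell$ with $y_\ell^\reff \sim \mathrm{Unif}(\{y : \|y\|_\infty \le \bar r\})$ independent across $\ell$ and $\|w_\ell\|_\infty \le M\max\{\sigma_0,\sigma_\eta\}/(1-\rho)$. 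The first step is to observe that if $\|Cx_{n,\ell} - v\|_\infty \le \gamma/L_g$, then by the Lipschitz assumption~\eqref{eq:smooth_assumption} on $g$ we get $\rho(z_t, z) = \rho(g(y_{n,\ell}^{\train\text{-source}}), g(\dots)) \le L_g \|Cx_{n,\ell} - Cx\|_\infty \le \gamma$; more carefully, since $z_t = g(Cx_{n,\ell})$ and $z = g(Cx)$, the bound $\rho(z_t,z) \le L_g\|Cx_{n,\ell}-v\|_\infty \le \gamma$ gives $\kappa_\gamma(z_t,z) = \kappa(\rho(z_t,z)/\gamma) \ge \kappa(\text{something} \le 1)$. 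To get a clean constant I would actually ask for the stronger event $\rho(z_t,z)/\gamma$ small enough that $\kappa \ge $ some fixed fraction, or — simpler — just count the event $\{\kappa_\gamma(z_t,z) \ge c\}$; but the cleanest route matching the stated bound is to note $s_T(z) \ge \sum_\ell \mathbf{1}\{\rho(z_t,z)\le \gamma\}\cdot \kappa(\rho(z_t,z)/\gamma)$ and instead lower bound by $\sum_\ell \mathbf{1}\{\|Cx_{n,\ell}-v\|_\infty \le \gamma/L_g\}$ using $\kappa \ge 0$ and a change of variables computing $\E[\kappa_\gamma(z_t,z)]$ directly via the density of $y_\ell^\reff$.

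The second step is the expectation computation. Let $p_t := \kappa_\gamma(z_t, z) \in [0,1]$. Using $Cx_{n,\ell} = y_\ell^\reff + w_\ell$ and conditioning on $w_\ell$, the density of $Cx_{n,\ell}$ is at least $(2\bar r)^{-p}$ on the set $\{y : \|y - w_\ell\|_\infty \le \bar r\} \supseteq \{y : \|y - v\|_\infty \le \gamma/L_g\}$, where the inclusion uses the hypothesis $\bar r \ge r + M\max\{\sigma_0,\sigma_\eta\}/(1-\rho) + \gamma/L_g$ together with $\|v\|_\infty \le r$ and $\|w_\ell\|_\infty \le M\max\{\sigma_0,\sigma_\eta\}/(1-\rho)$. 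Therefore
\begin{align*}
\E[p_t] \ge (2\bar r)^{-p}\int_{\|y-v\|_\infty \le \gamma/L_g} \kappa\!\left(\tfrac{\rho(g(y),z)}{\gamma}\right) dy \ge (2\bar r)^{-p}\int_{\|u\|_\infty \le \gamma/L_g}\kappa\!\left(\tfrac{L_g\|u\|_\infty}{\gamma}\right) du = (2\bar r)^{-p}\left(\tfrac{\gamma}{L_g}\right)^p V_\ker\,,
\end{align*}
where the last equality substitutes $u = (\gamma/L_g)y'$ and recalls $V_\ker = \int_{\R^p}\kappa(\|y\|_\infty)\,dy$ (after adjusting the domain of integration — here I should be careful about whether $V_\ker$ is over $\R^p$ or $\R^p_+$; the paper writes $\R^p_+$, so a factor $2^p$ appears and cancels against part of the $(2\bar r)^{-p}$, leaving $(\bar r L_g)^{-p}\gamma^p V_\ker$, which matches the $(\gamma/(\bar r L_g))^p$ appearing in the conclusion). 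So $\mu := \E[s_T(z)] = \sum_t \E[p_t] \ge T V_\ker (\gamma/(\bar r L_g))^p$.

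The third step is concentration: $s_T(z) = \sum_t p_t$ is a sum of $T$ independent random variables in $[0,1]$, so a multiplicative Chernoff/Bernstein bound gives $\Pr[s_T(z) \le \mu/2] \le \exp(-\mu/8)$. Requiring $\exp(-\mu/8) \le \delta$, i.e. $\mu \ge 8\log(1/\delta)$, and substituting the lower bound on $\mu$ yields exactly the stated sample-complexity condition $T \ge 8 V_\ker^{-1}\log(1/\delta)(\bar r L_h L_g^2)^p \gamma^{-p}$ — modulo that I only strictly need $(\bar r L_g)^p$ here and not the extra $(L_h L_g)^p$; that stronger requirement presumably comes from wanting $\mu/2 \ge \frac12\sqrt{T V_\ker}(\gamma/(\bar r L_g))^{p/2}$, which is what the conclusion actually states. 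So the final move is: on the event $s_T(z) \ge \mu/2 \ge \frac12 T V_\ker(\gamma/(\bar r L_g))^p$, check that $\frac12 T V_\ker (\gamma/(\bar r L_g))^p \ge \frac12 \sqrt{T V_\ker}(\gamma/(\bar r L_g))^{p/2}$, which is equivalent to $T V_\ker (\gamma/(\bar r L_g))^p \ge 1$, and this follows from the sample-complexity hypothesis (since $\log(1/\delta) \ge$ a constant and $L_h L_g \ge$ a constant — or one simply absorbs this into the stated assumption).

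\textbf{Main obstacle.} The genuinely delicate part is the bookkeeping of constants and conventions — in particular the exact definition of $V_\ker$ (integral over $\R^p$ versus $\R^p_+$, sup-norm ball normalization) and the precise way the Lipschitz constant $L_g$ enters when pulling the kernel back from observation space through $g$. Getting the exponent $p/2$ and the constant $\frac12$ in the conclusion to come out exactly requires (a) handling the change of variables $u \mapsto (L_g/\gamma)u$ correctly, contributing $(\gamma/L_g)^p$; (b) matching the density lower bound $(2\bar r)^{-p}$ against whichever normalization of $V_\ker$ is in force; and (c) verifying that the sample-complexity threshold is strong enough to guarantee both the Chernoff tail bound and the final algebraic step $\mu/2 \ge \frac12\sqrt{\mu \cdot (\text{something})}$. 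Everything else (Lipschitz containment of sets, Chernoff bound, independence from the resets) is routine.
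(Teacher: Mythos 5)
Your proposal is correct, and the first half (episode decomposition $Cx_{n,\ell}=y^\reff_\ell+w_\ell$, the $(2\bar r)^{-p}$ density lower bound via the hypothesis $\bar r \ge r + \tfrac{M\max\{\sigma_0,\sigma_\eta\}}{1-\rho}+\tfrac{\gamma}{L_g}$, and the change-of-variables computation giving $\E[\ker_\gamma(z_t,z)] \ge V_\ker(\gamma/(\bar r L_g))^p$) matches the paper's argument step for step, including the $2^p$ bookkeeping from $V_\ker$ being an integral over $\R^p_+$. Where you genuinely diverge is the concentration step: the paper additionally upper bounds the second moment, $\E[\ker_\gamma(z_t,z)^2] \le V_\ker(\gamma L_h/\bar r)^p$ (using the Lipschitz constant of $h$ to control the volume of the kernel's support in the measurement subspace), and applies a one-sided Bennett inequality, which is exactly how $L_h$ enters its deviation term and hence the stated threshold $T \ge 8V_\ker^{-1}\log(1/\delta)(\bar r L_hL_g^2/\gamma)^p$. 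You instead use a multiplicative Chernoff bound for independent $[0,1]$-valued variables, $\Pr[s_T \le \E[s_T]/2]\le e^{-\E[s_T]/8}$, which needs only the mean and boundedness; this is simpler, yields the stronger linear-in-$T$ bound $s_T(z)\ge \tfrac12 TV_\ker(\gamma/(\bar rL_g))^p$, and only requires $T \ge 8V_\ker^{-1}\log(1/\delta)(\bar rL_g/\gamma)^p$ — the stated condition is stronger (since $L_gL_h\ge 1$ because $h\circ g$ is the identity) and therefore suffices, and your final reduction to the stated $\sqrt{T}$-form bound via $TV_\ker(\gamma/(\bar rL_g))^p\ge 1$ is the same kind of mild extra requirement ($\log(1/\delta)(L_gL_h)^p$ bounded below) that the paper's own last inequality also implicitly uses. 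Two shared, minor caveats, neither a gap relative to the paper: both arguments implicitly use that $\kappa$ is non-increasing when pulling the kernel back through $g$, and both rely on the independence of episodes supplied by the resets, which you correctly flag.
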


We use this coverage property of the training data and
the error decomposition presented in Section~\ref{sec:regression}  to show our main uniform convergence result.

\begin{thm}\label{thm:uniform_convergence}
If training data satisfying~\eqref{eq:noisy_sensor} is collected with a stabilizing controller satisfying~\eqref{eq:M_rho_assumption} by the sampling scheme in Algorithm~\ref{alg:ref_inputs} with radius $\bar r = \sqrt{2}r$ from
a system satisfying~\eqref{eq:smooth_assumption},
then as long as the system remains within the set $\{x\mid \|Cx\|_\infty\leq r\}$, the Nadarya-Watson regressor~\eqref{eq:regression_estimator}
 will have bounded perception error for every observation $z$:
\begin{align}\label{eq:uniform_convergence}
\|\widehat h(z) - h(z)\|_\infty \leq &\gamma L_h +  \frac{\sigma_\eta}{T^{\frac{1}{4}}}   \left( \frac{L_g \sqrt{2} r}{\gamma} \right)^{\frac{p}{4}}\left(\sqrt{p\log\left(T^2/\delta \right)} +1\right)
\:,
\end{align}
with probability at least $1-\delta$ as long as $\gamma\leq L_g((\sqrt{2}-1)r - {M\max\{\sigma_0,\sigma_\eta\}}({1-\rho})^{-1})$ and

 \[T  \geq \max\left\{ 8 p V_\ker^{-1} (\sqrt{2} L_h L_g^2 )^{p}(r/\gamma)^{p} \log(  T^2/\delta ),~~
 V_\ker^{-\frac{1}{3}} (24   L_\ker L_h)^{\frac{4}{3}} L_g ^{\frac{p}{3}} \left(r/\gamma\right)^{\frac{p+4}{3}}   \right\}\:.\]

\end{thm}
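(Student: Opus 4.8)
The plan is to combine the pointwise regression bound from Lemma~\ref{lem:regression_error} with the coverage guarantee from Lemma~\ref{lem:sT_bound_concentration}, but upgrading the former from a single fixed point $z$ to a uniform bound over the (infinite) set of observations $z$ that arise from states with $\|Cx\|_\infty\le r$. First I would instantiate Lemma~\ref{lem:sT_bound_concentration} with the choice $\bar r=\sqrt 2 r$, checking that the hypothesis $\bar r\ge r+\frac{M\max\{\sigma_0,\sigma_\eta\}}{1-\rho}+\frac{\gamma}{L_g}$ is exactly the stated side condition $\gamma\le L_g\big((\sqrt 2-1)r-\frac{M\max\{\sigma_0,\sigma_\eta\}}{1-\rho}\big)$, and that the sample-size requirement $T\ge 8V_\ker^{-1}\log(1/\delta)(\bar rL_hL_g^2)^p\gamma^{-p}$ is implied by the first term in the theorem's $\max$ (up to the $\log(1/\delta)$ versus $\log(T^2/\delta)$ and the extra factor $p$, which give slack). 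This yields, with probability $1-\delta$ and uniformly over all such $z$, the coverage lower bound $s_T(z)\ge \tfrac12\sqrt{TV_\ker}\big(\tfrac{\gamma}{\sqrt 2 rL_g}\big)^{p/2}$.

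Next I would substitute this lower bound on $s_T(z)$ into the noise term of Lemma~\ref{lem:regression_error}. The bias term $\gamma L_h$ passes through unchanged. For the noise term, $\frac{\sigma_\eta}{\sqrt{s_T(z)}}\sqrt{\log(p^2\sqrt{s_T(z)}/\delta)}$, plugging in $s_T(z)\ge \tfrac12\sqrt{TV_\ker}(\gamma/(\sqrt2 rL_g))^{p/2}$ and bounding $\log(p^2\sqrt{s_T(z)}/\delta)$ from above — note $\sqrt{s_T(z)}\le \sqrt{T}$ crudely, so $\log(p^2\sqrt{s_T(z)}/\delta)\le \log(p^2 T/\delta)\le \tfrac12\log(T^4/\delta^2)\le \log(T^2/\delta)$ for the relevant range, and peeling off a factor of $p$ — gives something of the form $\frac{\sigma_\eta}{T^{1/4}}(L_g\sqrt 2 r/\gamma)^{p/4}\sqrt{p\log(T^2/\delta)}$, which matches the first summand of the second term in~\eqref{eq:uniform_convergence}. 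The trailing $+1$ inside the parenthesis is the place where the real work happens: Lemma~\ref{lem:regression_error} only controls error at a fixed $z$, so to get a genuine uniform-in-$z$ statement I would discretize the compact observation set on a fine net, apply the fixed-$z$ bound with a union bound over the net (this is where $\delta$ gets divided among net points, producing the $T^2$ inside the log once the net cardinality is polynomial in $T$ via the Lipschitz assumptions~\eqref{eq:smooth_assumption}), and then use continuity of both $\widehat h$ and $h$ — the former controlled through $L_\ker$, $L_g$, $L_h$ and the coverage bound, as hinted in the remark after Lemma~\ref{lem:regression_error} — to pass from net points to arbitrary $z$. The net resolution and the resulting discretization error are what force the second term $V_\ker^{-1/3}(24L_\ker L_h)^{4/3}L_g^{p/3}(r/\gamma)^{(p+4)/3}$ in the sample-size requirement, and contribute the additive $+1$ after we balance the net spacing against the other error terms.

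Carrying this out concretely: I would cover $\{z=g(Cx):\|Cx\|_\infty\le r\}$ — equivalently, by bijectivity and Lipschitzness of $g$, a net over $\{v:\|v\|_\infty\le r\}$ in state-measurement coordinates mapped through $g$ — at some spacing $\epsilon$, so the net has $O((r/\epsilon)^p)$ points; a union bound over Lemma~\ref{lem:regression_error} across these points replaces $\delta$ by $\delta/\mathrm{poly}(r/\epsilon)$, and once $\epsilon$ is chosen polynomially small in $T$ (which the second sample-size term ensures is feasible), $\log$ of the net size is absorbed into $\log(T^2/\delta)$. For an arbitrary $z$ I pick the nearest net point $z'$ and bound $\|\widehat h(z)-h(z)\|_\infty\le \|\widehat h(z)-\widehat h(z')\|_\infty+\|\widehat h(z')-h(z')\|_\infty+\|h(z')-h(z)\|_\infty$; the middle term is the union-bounded Lemma~\ref{lem:regression_error} estimate, the last is $\le L_h\rho(z,z')\le L_h\epsilon$, and the first requires a Lipschitz estimate on the Nadarya–Watson estimator itself — differentiating~\eqref{eq:regression_estimator}, the modulus of continuity of $\widehat h$ is controlled by $L_\ker/(\gamma s_T)$ times the spread of the labels $y_t^\train$ (bounded on the sampled region), and invoking the coverage lower bound on $s_T$ makes this $O(L_\ker \epsilon/(\gamma \sqrt{TV_\ker})\cdot(\sqrt 2 rL_g/\gamma)^{p/2}\cdot r)$ or similar. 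Choosing $\epsilon$ to make these continuity contributions no larger than the already-present noise term (up to the final $+1$) closes the argument.

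The main obstacle I anticipate is precisely the passage from pointwise to uniform: getting the Lipschitz constant of $\widehat h$ right, because $s_T(z)$ appears in the denominator and one must ensure the coverage lower bound holds on the \emph{entire} region (not just net points) before using it to bound the estimator's variation — there is a mild circularity (the uniform coverage bound from Lemma~\ref{lem:sT_bound_concentration} already handles this, which is why it is stated ``for all $z$ observed from a state $x$ satisfying $\|Cx\|_\infty\le r$'', so the circularity is only apparent). The bookkeeping of constants — tracking the factors $24$, the $p$ peeled out of the log, the $\sqrt 2$ from $\bar r$, and verifying that the two sample-size conditions genuinely dominate the hypotheses of the two lemmas — is tedious but routine; I would relegate it to the appendix. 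The conceptual content is entirely in the net argument and the self-bounding use of the coverage estimate.
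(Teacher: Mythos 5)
Your plan follows the same route as the paper: instantiate the coverage bound of Lemma~\ref{lem:sT_bound_concentration} with $\bar r=\sqrt2 r$, discretize the measurement subspace with an $\eps$-net of $H=O((r/\eps)^p)$ points, union bound the pointwise estimate over the net (which is where $\log(T^2/\delta)$ and the factor $p$ come from), and pass to arbitrary $z$ by a kernel-Lipschitz continuity argument whose cost is balanced against the noise term; the second sample-size condition indeed comes from this balancing. So the architecture is right.

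The one step where your version differs from the paper's and is not yet airtight is the passage from net points to arbitrary $z$. The paper's Lemma~\ref{lem:regression_error_all_pts} does \emph{not} bound $\|\widehat h(z)-\widehat h(z_i)\|_\infty$: it keeps the bias term evaluated at the actual $z$ (using only the compact support of $\ker$, which gives $\gamma L_h$ with no net error), and transfers only the weighted \emph{noise} average to the net point. Two consequences: the discretization penalty scales with $\sigma_\eta$ rather than with the label magnitude $\approx\bar r$ as in your Lipschitz bound on $\widehat h$ (your finer net would inflate $H$ and hence the constants in the second $T$ condition, though not the rate); and, more importantly, in the paper's computation the sum $\sum_t\bigl|\tfrac{\ker_\gamma(z_t,z)}{s_T(z)}-\tfrac{\ker_\gamma(z_t,z_i)}{s_T(z_i)}\bigr|$ collapses to $\tfrac{2TL_\ker}{\gamma s_T(z_i)}\rho(z,z_i)$, i.e.\ $s_T(z)$ cancels and only the coverage at the \emph{net points} is ever needed. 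Your argument instead requires a lower bound on $s_T(z)$ at arbitrary $z$, and you justify it by reading Lemma~\ref{lem:sT_bound_concentration} as uniform in $z$; but that lemma is proved by Bennett's inequality at a fixed $z$ (and the theorem's proof only invokes it at the $H$ net points via a union bound), so the uniformity you lean on is not available as stated. This is fixable — either by propagating coverage from net points via $s_T(z)\ge s_T(z_i)-TL_\ker\rho(z,z_i)/\gamma$ with a correspondingly finer net, or by adopting the paper's cancellation — but as written your continuity step rests on an unproved uniform coverage claim.
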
\ifpreprint
We present a proof in Appendix~\ref{app:proofs}.
\else
\fi
\section{Closed-Loop Guarantees}\label{sec:control}

The previous section shows that nonparametric regression can be successful for learning the perception map within a bounded region of the state space.
But how do these bounded errors translate into closed-loop performance guarantees, and how can we ensure that states remain within the bounded region?
To answer this question, we recall the waypoint tracking problem in~\eqref{eq:ocp_cost}.

\subsection{Linear Control for Waypoint Tracking}\label{sec:system_response_waypoint}

Consider a linear control law for waypoint tracking
$u_t = \sum_{k=0}^t K^{(y)}_k y_{t-k} + \sum_{k=0}^{t} K^{(r)}_k x^\reff_{t-k}$ which depends on some output signal $y_k = Cx_k +\eta_k$.
Similar in form to the controller used for sampling~\eqref{eq:sampling_controller}, this linear waypoint tracking controller can be viewed as computing $u_t^\reff$ based on waypoints $x^\reff_k$.
As first discussed in our sampling analysis~\eqref{eq:state_lti_unrolled}, the closed-loop behavior of a linear system in feedback with 
a linear controller
can be described as a linear function of noise variables.
\ifpreprint
Example~\ref{ex:static_lti} shows how these system response variables arise from static state estimators and control laws.

\begin{example}\label{ex:static_lti}
Consider the classic Luenberger observer,
\begin{align}
\widehat x_{t+1} = A \widehat x_t + B u_t + L(y_t - C\widehat x_t)\:,
\end{align}
and control inputs chosen as $u_t = K\widehat x_t + u^{\reff}_t$, then the closed-loop system has dynamics
\begin{align*}
\begin{bmatrix}x_{t+1}\\ e_{t+1}\end{bmatrix} = 
\begin{bmatrix}A+BK& BK \\ 0 & A-LC\end{bmatrix} \begin{bmatrix}x_{t}\\ e_{t}\end{bmatrix} + \begin{bmatrix}  0 \\  L\end{bmatrix}  \eta_{t}
+ \begin{bmatrix}B\\ 0\end{bmatrix} u^{\reff}_t,
\end{align*}
where $e_t = \widehat x_t- x_t $.
The magnitude of the eigenvalues of $A+BK$ and $A-LC$ thus determine stability and the decay parameter $\rho$, while the transients determine $M$.
\end{example}
\else
\fi

To facilitate our discussion of the general \emph{system response}, we will introduce boldface notation for infinite horizon signals and convolutional operators.
Under this notation, we can write $u_t = \sum_{k=0}^t K_k x_{t-k}$ equivalently as $\tf u = \tf K \tf x$.
We introduce the signal and linear operator norms
 \[\|{\tf x}\|_\infty = \sup_k \|{x_k}\|_\infty,\quad \norm{\tf \Phi}_\lone = \sup_{\norm{\tf w}_\infty \leq 1} \norm{\tf \Phi \tf w}_\infty\:.\]
These signals and operators can be concretely represented in terms of semi-infinite block-lower-triangular Toeplitz matrices acting on semi-infinite vectors or in the $z$-domain with $\tf x = \sum_{k=0}^\infty x_k z^{-k}$.

Using this representation, a linear controller $(\tf K_y,\tf K_r)$ induces the
system responses
\begin{align}\label{eq:sys_response}
\tfx &=(zI - (A + B \tf K_y C\tf))^{-1} & \tfxn &= \tfx B \tf K_y&   \tfxr &= \tfx B \tf K_r 
\end{align}
which are well defined as long as the interconnection is stable.
Under this definition, the state signal can be written as $\tf x =\tfxr \tf x^\reff + \tfxn \tf n + \tfx x_0$.
The responses $\tfun$ and $\tfur$ can be similarly defined, since $\tf u$ is linear in $\tf x$ and $\tf x^\reff$.

\ifpreprint
In Section~\ref{app:reference_tracking}, we show how the
\else
The 
\fi 
robust waypoint tracking cost from~\eqref{eq:ocp_cost} can be cast as an $\lone$ norm on system response variables.
\ifpreprint
For now,
\else
For the purposes of our main result, 
\fi 
it is only necessary to view system response variables as objects that arise from linear controllers.
However, system response variables can also be used to synthesize linear controllers.
This is a key insight of the \emph{System Level Synthesis} framework, first introduced by~\cite{SysLevelSyn1}.
\ifpreprint
We will return to this idea in Section~\ref{app:reference_tracking} after proving our main result.
\else
We include further discussion in the full version of this paper~\citep{dean2020full}.
\fi %

\subsection{Suboptimality of Certainty-Equivalence}

Suppose that we apply a linear controller to our perception estimates,
\begin{align}\label{eq:CE_controller}
\tf u = \widehat\pi(\tf z, \tf x^\reff) = \tf K_y \widehat h(\tf z) + \tf K_r \tf x^\reff =: \tf K(\widehat h(\tf z), \tf x^\reff ).
\end{align}
This is the certainty equivalent controller, which treats the learned perception map as if it is true.
We will compare this controller with $\pi_\star = \tf K(h(\cdot),\cdot)$, the result of perfect perception. 
The suboptimality depends on the following quantity, which bounds $\|C\tf x\|_\infty$ under the optimal control law
\[r_{\max}(\tf\Phi) = \sup_{\substack{\tf x^\reff\in \calR\\ \|x_0\|_\infty\leq \sigma_0}} \|C\tfxr\tf x^\reff+C\tfx x_0\|_\lone\:.\]
The magnitude of this value depends on properties of the considered reference signal class. 
\ifpreprint
In Section~\ref{app:reference_tracking}, we present the specific form of this quantity for reference signals that are bounded and have bounded differences.
\else
In the extended version of this paper~\citep{dean2020full}, we present the specific form of this quantity for reference signals that are bounded and have bounded differences.
\fi

\begin{prop}\label{prop:subopt}
Let $\rowtf$ denote the system responses induced by the controller $\tf K$ as in~\eqref{eq:sys_response}, and let $c(\pi_\star)$ denote the cost associated with the policy $\pi_\star = \tf K(h(\cdot),\cdot)$.
Then for a perception component with error bounded by $\varepsilon_h$ within the set $\{x\mid\|Cx\|_\infty\leq r\}$,
the sub-optimality of the certainty-equivalent controller~\eqref{eq:CE_controller} is bounded by
\[c(\widehat\pi) - c(\pi_\star)\leq \varepsilon_h\left\| \begin{bmatrix}Q^{1/2} \tfxn  \\ R^{1/2}\tfun \end{bmatrix}  \right\|_\lone\:.\]
as long as the sampled region is large enough and the errors are small enough,
$\varepsilon_h \leq \frac{r - r_{\max}(\tf\Phi)}{\|C\tfxn \|_\lone}$.
\end{prop}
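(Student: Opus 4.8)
The plan is to bound the cost difference by a worst-case perturbation argument, treating the learned controller as the ideal controller applied to a perturbed signal. Since $\widehat\pi$ applies the linear map $\tf K(\cdot,\cdot)$ to $(\widehat h(\tf z),\tf x^\reff)$ while $\pi_\star$ applies the same map to $(h(\tf z),\tf x^\reff)$, and since $\widehat h(\tf z) = h(\tf z) + \tf d$ for some error signal $\tf d$ with $\|\tf d\|_\infty \leq \eps_h$ (valid only while the state stays in $\{x\mid\|Cx\|_\infty\leq r\}$), linearity lets me write the closed-loop state under $\widehat\pi$ as $\tf x = \tfxr\tf x^\reff + \tfxn\tf n + \tfx x_0 + \tfxn\tf d$. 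In other words, the perception error enters the closed loop exactly like an extra measurement noise of magnitude $\eps_h$, and likewise $\tf u$ picks up an extra term $\tfun\tf d$. The cost $c(\cdot)$ is the supremum over references and initial conditions of an $\ell_\infty$ norm of a signal that is linear in these disturbances, so by the triangle inequality and the definition of the $\lone$ operator norm, the added cost from the $\tf d$ term is at most $\eps_h \bigl\| [Q^{1/2}\tfxn;\, R^{1/2}\tfun] \bigr\|_\lone$, which gives the claimed bound.

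The main steps, in order: first, establish that while the trajectory remains in the well-characterized region the certainty-equivalent closed loop is exactly the ideal closed loop driven by the augmented disturbance $(\tf n, \tf d)$ — this is just substituting $\widehat h(\tf z) = h(\tf z)+\tf d$ into~\eqref{eq:CE_controller} and~\eqref{eq:sys_response}. Second, bound the cost: write $c(\widehat\pi) = \sup \bigl\| [Q^{1/2}(\tf x - \tf x^\reff);\, R^{1/2}\tf u] \bigr\|_\infty$, split off the $\tf d$-dependent part, and use $\|\tfxn\tf d\|_\infty \le \|\tfxn\|_\lone \eps_h$ together with subadditivity of $\sup$ to get $c(\widehat\pi) \le c(\pi_\star) + \eps_h \bigl\| [Q^{1/2}\tfxn;\, R^{1/2}\tfun] \bigr\|_\lone$. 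Third — and this is the step that actually requires the smallness hypothesis — verify the standing assumption that the state never leaves $\{x\mid\|Cx\|_\infty\leq r\}$, so that the perception bound $\eps_h$ is in force for all $k$. Here I would use $\|C\tf x\|_\infty \le \|C\tfxr\tf x^\reff + C\tfx x_0\|_\infty + \|C\tfxn\tf n\|_\infty + \|C\tfxn\tf d\|_\infty \le r_{\max}(\tf\Phi) + \|C\tfxn\|_\lone(\sigma_\eta \vee \eps_h)$ — or, absorbing the noise contribution appropriately into the definition of $r_{\max}$ or treating it as negligible relative to the stated margin — and the hypothesis $\eps_h \le (r - r_{\max}(\tf\Phi))/\|C\tfxn\|_\lone$ closes the loop, keeping $\|C\tf x\|_\infty \le r$.

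The subtle point — the main obstacle — is the circularity in step three: the bound $\|\widehat h(\tf z) - h(\tf z)\|_\infty \le \eps_h$ is only guaranteed \emph{while} $\|Cx_k\|_\infty \le r$, but the argument that $\|Cx_k\|_\infty \le r$ uses that very bound. This is resolved by a standard induction/continuity-in-time argument: at time $0$ the state is in the region; assuming it has stayed in the region through time $k$, the error bound holds for all observations seen so far, hence $x_{k+1}$ is governed by the ideal response driven by a disturbance of size $\le \eps_h$, and the smallness hypothesis forces $\|Cx_{k+1}\|_\infty \le r$; since the horizon is discrete this induction never breaks. I would state this explicitly so the reader sees that the hypothesis $\eps_h \le (r-r_{\max}(\tf\Phi))/\|C\tfxn\|_\lone$ is precisely what makes the bootstrap go through. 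The remaining manipulations — pushing the $\ell_\infty$ norm through the block structure, and the fact that $\tfun$ is a well-defined linear operator because $\tf u$ is linear in $(\tf x, \tf x^\reff)$ as noted after~\eqref{eq:sys_response} — are routine.
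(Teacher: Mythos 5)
Your proposal is correct and follows essentially the same route as the paper: identify the perception error as an effective measurement noise entering through $\tfxn,\tfun$, split the cost by the triangle inequality and the $\lone$ operator norm, and use $\|C\tf x\|_\infty \leq r_{\max}(\tf\Phi) + \eps_h\|C\tfxn\|_\lone$ together with the smallness hypothesis to keep the state in the well-characterized region (your explicit induction makes the bootstrap precise where the paper leaves it implicit, and note that under $\widehat\pi$ the perception error is the only disturbance in that channel, so the $\sigma_\eta$ hedge is unnecessary).
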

\ifpreprint

\begin{proof}[Proof]
In signal notation, the cost is given by
\[\sup_{\substack{\tf x^\reff\in \calR\\ \|x_0\|_\infty\leq \sigma_0}} 
\left\|\begin{bmatrix}Q^{1/2} (\tf x -\tf x^\reff) \\ R^{1/2}\tf u\end{bmatrix}\right\|_\infty \]
where $\tf x$ and $\tf u$ are the closed-loop trajectories induced by the control policy.

Since both policies use the same linear control law $\tf K$, they induce the same system response $\rowtf$.
The difference comes from the perception signal.
The certainty equivalent controller $\widehat \pi$ assigns inputs as
\[\tf u = \tf K_y \widehat h(\tf z) + \tf K_r \tf x^\reff= \tf K_y C\tf x +  \tf K_y( \underbrace{\widehat h(\tf z) - h(\tf z)}_{\tf n} ) + \tf K_r \tf x^\reff\:.\]
While for the optimal controller, $\tf u = \tf K_y C\tf x + \tf K_r \tf x^\reff$.

Using the system response variables, we can rewrite and upper bound the cost:
\begin{align*}
c(\widehat \pi)
&=
\sup_{\substack{\tf x^\reff\in \calR\\ \|x_0\|_\infty\leq \sigma_0}}  \left\|\begin{bmatrix}Q^{1/2} & \\ & R^{1/2}\end{bmatrix} 
\begin{bmatrix}\tfx &\tfxr-I & \tfxn  \\ \tf K_y C\tfx & \tfur & \tfun \end{bmatrix} 
\begin{bmatrix}x_0 \\ \tf x^\reff\\ \tf n\end{bmatrix}\right\|_\infty\\
&\leq
\sup_{\substack{\tf x^\reff\in \calR\\ \|x_0\|_\infty\leq \sigma_0}} \left\|\begin{bmatrix}Q^{1/2} & \\ & R^{1/2}\end{bmatrix} \begin{bmatrix}\tfx & \tfxr-I   \\ \tf K_y C\tfx & \tfur \end{bmatrix} \begin{bmatrix}  x_0 \\ \tf x^\reff\end{bmatrix}\right\|_\infty
+\left\|\begin{bmatrix}Q^{1/2} & \\ & R^{1/2}\end{bmatrix} \begin{bmatrix} \tfxn  \\ \tfun \end{bmatrix}  \tf n\right\|_\infty\\
&=
c(\pi_\star)+\sup_{\substack{\tf x^\reff\in \calR\\ \|x_0\|_\infty\leq \sigma_0}} \left\|\begin{bmatrix}Q^{1/2} & \\ & R^{1/2}\end{bmatrix} \begin{bmatrix} \tfxn  \\ \tfun \end{bmatrix}  \tf n\right\|_\infty\\
&\leq
c(\pi_\star)+\left\|\begin{bmatrix}Q^{1/2} & \\ & R^{1/2}\end{bmatrix} \begin{bmatrix} \tfxn  \\ \tfun \end{bmatrix}  \right\|_\lone \cdot \sup_{\substack{\tf x^\reff\in \calR\\ \|x_0\|_\infty\leq \sigma_0}} \| \widehat h(\tf z) - h(\tf z)\|_\infty
\end{align*}

Recall the uniform error bound on the learned perception map.
As long as $\|C\tf x\|_\infty \leq r$, we can guarantee that $\| \widehat h(\tf z) - h(\tf z)\|_\infty \leq \varepsilon_h$.
Notice that we have 
\[\|C\tf x\|_\infty \leq \sup_{\substack{\tf x^\reff\in \calR\\ \|x_0\|_\infty\leq \sigma_0}} \|C\tfxr\tf x^\reff + C\tfx x_0 + C\tfxn \tf n \|_\lone\leq r_{\max}(\tf\Phi) + \varepsilon_h \|C\tfxn \|_\lone\:.\]
Therefore, the result follows as long as $ \varepsilon_h  \leq \frac{r-r_{\max}(\tf\Phi)}{\|C\tfxn \|_\lone}$.
\end{proof}
\else
\fi

Thus, the optimal closed-loop system's sensitivity to measurement noise is closely related to the sub-optimality.
\ifpreprint
In the next section, we show how to augment the cost of the waypoint tracking problem in~\eqref{eq:ocp_cost} to make it more robust to perception errors.
\else
It is possible to use this insight to augment the cost of the waypoint tracking problem in~\eqref{eq:ocp_cost} to make it more robust to perception errors~\citep{dean2020full}.
\fi

\ifpreprint
We now state our main result.
The proof is presented in the appendix.
\else
We now state our main result.
\fi
\begin{corollary}\label{coro:main_result}
Suppose that training data satisfying~\eqref{eq:noisy_sensor} is collected
is collected with a stabilizing controller satisfying~\eqref{eq:M_rho_assumption} 
according in Algorithm~\ref{alg:ref_inputs} with $\bar r = 2 r_{\max}(\tf\Phi) \geq \max\{1, M\frac{\max\{\sigma_0,\sigma_\eta\}}{1-\rho}\}$ from
a system satisfying~\eqref{eq:smooth_assumption}, and that the Nadarya-Watson regressor~\eqref{eq:regression_estimator} uses bandwidth $\gamma$ chosen to minimize the upper bound in~\eqref{eq:uniform_convergence}. 
Then the overall suboptimality of the certainty equivalent controller~\eqref{eq:CE_controller} is bounded by
\begin{align}
c(\widehat\pi) - c(\pi_\star)\leq 
  4L_gL_h r_{\max}(\tf\Phi)  \left(\frac{4p^2\sigma_\eta^4 }{T}\right)^{\frac{1}{p+4}} \left\| \begin{bmatrix}Q^{1/2} \tfxn  \\ R^{1/2}\tfun \end{bmatrix}  \right\|_\lone \sqrt{\log(T^2/\delta)}
\end{align}
with probability greater than $1-\delta$ for large enough
$T\geq 4p^2\sigma_\eta^4 \left(10L_gL_h\|C\tfxn \|_\lone\sqrt{\log(T^2/\delta)}\right)^{p+4 }$.

\end{corollary}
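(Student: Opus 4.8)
The plan is to chain the uniform perception-error bound of Theorem~\ref{thm:uniform_convergence} with the suboptimality bound of Proposition~\ref{prop:subopt}, after choosing the free parameters so that the hypotheses of both line up. Since the corollary prescribes $\bar r = 2 r_{\max}(\tf\Phi)$ while Theorem~\ref{thm:uniform_convergence} uses $\bar r = \sqrt{2}\,r$, I would set $r = \sqrt{2}\,r_{\max}(\tf\Phi)$, so that the well-characterized region is $\{x : \|Cx\|_\infty \le \sqrt{2}\,r_{\max}(\tf\Phi)\}$ and $r - r_{\max}(\tf\Phi) = (\sqrt{2}-1)\,r_{\max}(\tf\Phi)$. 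The normalization hypotheses on $\bar r$, together with the decay assumption~\eqref{eq:M_rho_assumption}, are what make the bandwidth constraint and the two sample-size thresholds of Theorem~\ref{thm:uniform_convergence} feasible; granting these, the theorem gives, with probability $1-\delta$ and on the event that the trajectory stays in this region, a perception error bounded by $\varepsilon_h(\gamma) := L_h\gamma + B\,\gamma^{-p/4}$, where $B := \sigma_\eta\,T^{-1/4}(\sqrt{2}L_g r)^{p/4}\bigl(\sqrt{p\log(T^2/\delta)}+1\bigr)$ and, after substituting $r=\sqrt{2}\,r_{\max}(\tf\Phi)$, one has $\sqrt{2}L_g r = 2L_g r_{\max}(\tf\Phi)$.

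Next I would optimize the bandwidth. The scalar map $\gamma\mapsto L_h\gamma + B\gamma^{-p/4}$ is convex on $\R_+$ and minimized at $\gamma_\star = (pB/(4L_h))^{4/(p+4)}$, where the two terms are within a factor $1+4/p$ of each other; a one-line computation then gives $\varepsilon_h(\gamma_\star)\le 2\,(2L_gL_h r_{\max}(\tf\Phi))^{p/(p+4)}\sigma_\eta^{4/(p+4)}T^{-1/(p+4)}\bigl(\sqrt{p\log(T^2/\delta)}+1\bigr)^{4/(p+4)}$, the prefactor $\tfrac{p+4}{p}(p/4)^{4/(p+4)}$ being bounded by $2$ uniformly in $p\ge 1$. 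To reach the stated form I then turn the exponent $p/(p+4)$ on $2L_gL_h r_{\max}(\tf\Phi)$ into $1$: this is legitimate because $2L_gL_h r_{\max}(\tf\Phi)\ge 1$, since $L_gL_h\ge 1$ (as $h\circ g$ is the identity and both are Lipschitz) and $2r_{\max}(\tf\Phi)=\bar r\ge 1$ by hypothesis. Absorbing the leftover $p$-dependent and logarithmic factors (using $p\ge 1$, $\log(T^2/\delta)\ge 1$, and the sample-size lower bound to handle the crossover) into the clean expression $(4p^2\sigma_\eta^4/T)^{1/(p+4)}\sqrt{\log(T^2/\delta)}$ then yields $\varepsilon_h(\gamma_\star)\le \bar\varepsilon_h := 4L_gL_h r_{\max}(\tf\Phi)\,(4p^2\sigma_\eta^4/T)^{1/(p+4)}\sqrt{\log(T^2/\delta)}$.

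Finally I would verify the hypotheses of Proposition~\ref{prop:subopt} and apply it. Its smallness condition reads $\varepsilon_h \le (r-r_{\max}(\tf\Phi))/\|C\tfxn\|_\lone = (\sqrt{2}-1)r_{\max}(\tf\Phi)/\|C\tfxn\|_\lone$; substituting $\varepsilon_h=\bar\varepsilon_h$ the $r_{\max}(\tf\Phi)$ factors cancel, and rearranging shows this is implied precisely by $T\ge 4p^2\sigma_\eta^4\bigl(\tfrac{4}{\sqrt{2}-1}L_gL_h\|C\tfxn\|_\lone\sqrt{\log(T^2/\delta)}\bigr)^{p+4}$, which holds under the corollary's assumption since $\tfrac{4}{\sqrt{2}-1}=4(\sqrt{2}+1)<10$. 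Moreover the proof of Proposition~\ref{prop:subopt} shows $\|C\tf x\|_\infty\le r_{\max}(\tf\Phi)+\varepsilon_h\|C\tfxn\|_\lone$, so the same bound forces $\|C\tf x\|_\infty\le r$, which is exactly the state-confinement premise under which $\varepsilon_h=\bar\varepsilon_h$ was certified — closing the loop. Proposition~\ref{prop:subopt} then delivers $c(\widehat\pi)-c(\pi_\star)\le \bar\varepsilon_h\,\bignorm{\begin{bmatrix}Q^{1/2}\tfxn\\ R^{1/2}\tfun\end{bmatrix}}_\lone$, which is the claimed inequality.

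I expect the only real obstacle to be bookkeeping of constants rather than anything conceptual: the bias--variance tradeoff naturally produces the exponent $p/(p+4)$ on $L_gL_h r_{\max}(\tf\Phi)$, and converting it cleanly to $1$ — while simultaneously checking that the induced sample threshold is no larger than $4p^2\sigma_\eta^4(10L_gL_h\|C\tfxn\|_\lone\sqrt{\log(T^2/\delta)})^{p+4}$ — requires the elementary facts $L_gL_h\ge1$ and $\bar r\ge1$, lower bounds on $\log(T^2/\delta)$ that are themselves consequences of the stated $T$-threshold, and care that the $p$-dependent prefactors from the bandwidth optimization and from the $(\sqrt{p\log(T^2/\delta)}+1)^{4}$ term remain controlled uniformly in $p$. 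The probabilistic content is inherited wholesale from Theorem~\ref{thm:uniform_convergence}.
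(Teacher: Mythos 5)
Your proposal is correct and follows essentially the same route as the paper: combine Theorem~\ref{thm:uniform_convergence} with Proposition~\ref{prop:subopt}, pick $\gamma$ to balance the bias and variance terms, use $L_gL_h\ge 1$ and $2r_{\max}(\tf\Phi)\ge 1$ to clean up the exponent, and derive the sample threshold from the smallness condition via $4(\sqrt{2}+1)<10$. The only cosmetic difference is that you minimize the stated bound~\eqref{eq:uniform_convergence} directly while the paper re-enters its proof of Theorem~\ref{thm:uniform_convergence} with a specific near-optimal $\gamma$; your absorption of the $(\sqrt{p\log(T^2/\delta)}+1)^{4/(p+4)}$ factor and the verification of Theorem~\ref{thm:uniform_convergence}'s bandwidth and sample-size conditions at the optimized $\gamma$ require exactly the same ``$T$ large enough'' slack the paper itself invokes (its conditions $T_1$--$T_3$ being dominated by $T_4$), so no genuine gap.
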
 %
\section{Synthesis of Reference Tracking Controllers} \label{app:reference_tracking}

This section outlines synthesis strategies for output-feedback linear optimal control problems.
We begin with a generic overview of \emph{system level synthesis}, and then explain how to apply the framework to the reference tracking problem that we consider.
Finally, we extend our result in Proposition~\ref{prop:subopt} to propose a robust synthesis problem for perception-based control.

\subsection{System Level Synthesis}

Consider an optimal control problem of the form
\begin{align}
\begin{split}
\min_{\tf K} \quad& \sup_{\substack{\| w_k\|_\infty\leq 1\\ \| n_k\|_\infty\leq 1}}\left\|\begin{bmatrix} Q^{1/2} x_k\\ R^{1/2} u_k \end{bmatrix}\right\|_\infty\\
\st \quad& x_{k+1} = Ax_k + Bu_k + Hw_k\\
& y_k = Cx_k + Nn_k\\
& u_k = \tf K(y_{0:k}),
\end{split}
\label{eq:of-sys}
\end{align}
which seeks to find a controller that minimizes the maximum state and input for a linear system driven by bounded process and measurement noise.
This optimal control problem is an instance of $\lone$ robust control~\citep{dahleh19871}.
In particular, if the cost matrices are defined such that $Q = \mathrm{diag}({1}/{q^2_{i}})$ and $R = \mathrm{diag}({1}/{r^2_{i}})$, then an optimal cost less than 1 guarantees that $|x_{i,k}| \leq q_i$ and $|u_{i,k}| \leq r_i$ for all possible realizations of process and measurement noise.

As discussed in Section~\ref{sec:dense_sampling} and Section~\ref{sec:system_response_waypoint},
any linear system in feedback with a linear controller can be written in terms of the system responses $\{\Phi_{xw}(k), \Phi_{xn}(k), \Phi_{uw}(k), \Phi_{un}(k)\}$,
\begin{align}\label{eq:phis}
\begin{bmatrix} x_k \\ u_k \end{bmatrix} = \sum_{t=1}^{k} \begin{bmatrix} \Phi_{xw}(t) & \Phi_{xn}(t) \\ \Phi_{uw}(t) & \Phi_{un}(t)\end{bmatrix} \begin{bmatrix} Hw_{k-t} \\ N n_{k-t}\end{bmatrix} \iff 
\begin{bmatrix} \tf x \\ \tf u \end{bmatrix} = \begin{bmatrix} \tfxw & \tfxn \\ \tfuw & \tfun\end{bmatrix} \begin{bmatrix} H\tf w \\ N\tf n\end{bmatrix}
\end{align}
where we will use boldface notation for infinite horizon signals and convolutional operators. 
While we previously considered how linear controllers give rise to system response elements, we will now consider how system response elements can be used to synthesize controllers.
Equation~\eqref{eq:phis} is linear in the system response elements, so convex functions of state and input translate to convex functions of the system responses. 
\citet{SysLevelSyn1} show that for any elements $\rowtfw$ constrained to obey
\begin{align} \label{eq:sls_constraints}
\begin{bmatrix} zI-A & -B \end{bmatrix} \fulltf = \begin{bmatrix} I & 0  \end{bmatrix} , ~~ \fulltf \begin{bmatrix} zI-A \\ -C \end{bmatrix}  = \begin{bmatrix} I \\ 0  \end{bmatrix} %
\end{align}
there exists a feedback controller that achieves the desired system responses~\eqref{eq:phis}.
Therefore, any optimal control problem over linear systems can be equivalently written as an optimization over system response elements. 
Reformulating the optimal control problem~\eqref{eq:of-sys} yields
\begin{align}
\begin{split}
\min \quad& \left\|\begin{bmatrix} Q^{1/2} & 0\\ 0 & R^{1/2}  \end{bmatrix} \fulltf \begin{bmatrix} H & 0\\ 0 & N \end{bmatrix} \right\|_\lone\quad\st \quad \eqref{eq:sls_constraints},
\end{split}
\label{eq:of-sys-sls}
\end{align}
which is a convex, albeit infinite-dimensional problem.
In practice, finite-dimensional approximations are necessary, and we briefly describe two that have been successfully applied. 
The first optimizes over only finite impulse response systems by enforcing that $\Phi(t) = 0$ for all $t>T$ and some appropriately large $T$.
This is always possible for systems that are controllable and observable.
The second approach instead enforces bounds on the norm of $\Phi(t)$ for $t>T$ and uses robustness arguments to show that the sub-optimality of this finite dimensional approximation decays exponentially in the approximation horizon $T$ \citep{dean2017sample,matni2017scalable,boczar2018finite,dean2019safely,anderson2019}.

The synthesis~\eqref{eq:of-sys-sls} gives rise to a control law of the form $\tf u = \tf K \tf y$ where $\tf K = \tfun - \tfuw \tfxw^{-1} \tfxn$. This controller can be implemented via a state-space realization \citep{anderson2017structured} or as an interconnection of the system response elements \citep{SysLevelSyn1}.

\subsection{Reference Tracking Controllers}

We now specialize to the linear reference tracking optimal control problem
\begin{align}
\min_{\pi}~~ &
\sup_{\substack{\{x^\reff_k\}\in \calR\\ \|w_k\|_\infty\leq \sigma_w\\\|\eta_k\|_\infty\leq \sigma_\eta}}
\left\|\begin{bmatrix}Q^{1/2}(x_k-x^\reff_k) \\ R^{1/2}u_k\end{bmatrix}\right\|_\infty \label{eq:ocp_cost_app} \\
\st~~ & x_{k+1} = Ax_k + Bu_k +w_k,~~ y_k = C x_k +\eta_k,~~ u_k = \pi(y_{0:k}, x^\reff_{0:k}),\label{eq:ocp_dynamics_obs_control_app}
\end{align}

Note that in comparison to~\eqref{eq:ocp_cost}, we generically consider process noise $w_k$ and measurement noise $\eta_k$.
The process noise can encode an initial condition by defining $w_{-1}=x_0$.
In what follows, we assume that signals in the reference class $\calR$ satisfy boundedness, i.e. $\|x_k^\reff\| \leq r_{\max}$, and bounded differences, i.e. $\|x_k^\reff - x_{k+1}^\reff\| \leq \Delta$.

To use the system level synthesis framework, we reformulate this problem into a disturbance rejection problem.
Because reference signals in the class $\calR$ have bounded differences,
we define the augmented state $\xi_k = [x_k-r_k; r_k]$,  %
disturbances
$\omega_k=[w_k/\sigma_w; (r_{k+1}-r_{k})/\Delta]$, and output $\bar y_k = [Cx_k; r_k]$. 
Under this change of variables, the optimal control problem can be written as
\begin{equation}
\begin{array}{rl}
\displaystyle\min_{\tf u = \tf K\bar{\tf y}}&
\sup_{\substack{\|\omega_k\|_\infty\leq 1\\\|\eta_k\|_\infty\leq \sigma_\eta}}
\bignorm{\begin{matrix}\bar Q^{1/2}\xi_k \\ R^{1/2}u_k\end{matrix}}_\infty\\
\text{subject to} & \xi_{k+1} = \bar A \xi_k + \bar B u_k + \bar H w_k\:, ~~\bar y_k = \bar C \xi_k + N\eta_k\\
\end{array}
\label{eq:l1-control}
\end{equation}
where $\bar Q^{1/2}= \begin{bmatrix}Q^{1/2} & 0\end{bmatrix}$ and
	\begin{align*}
	\bar A = \begin{bmatrix}A &A-I \\0& I\end{bmatrix},~~ \bar B = \begin{bmatrix} B\\ 0\end{bmatrix},~~\bar C = \begin{bmatrix} C & C \\ 0 & I\end{bmatrix},~~ \bar H = \begin{bmatrix} \sigma_w I & -\Delta I \\  0 & \Delta I\end{bmatrix}, ~~
	N = \begin{bmatrix}I\\0\end{bmatrix}\:.
	\end{align*}
Then as a problem over system response variables $\rowtfxi$:
\begin{align}
\begin{split}
\min \quad& \left\|\begin{bmatrix} \bar Q^{1/2} & 0\\ 0 & R^{1/2}  \end{bmatrix} \fulltfxi \begin{bmatrix} \bar H & 0\\ 0 & \sigma_\eta N \end{bmatrix} \right\|_\lone\quad\st \quad \eqref{eq:sls_constraints},
\end{split}
\label{eq:of-sys-sls-tracking}
\end{align}
The resulting linear controller will be $\tf u = \tf K\bar{\tf y} = \tf K_y \tf y + \tf K_r \tf r $, where $\tf K = \tfun - \tfuwxi \tfxiw^{-1} \tfxin$.
Notice that for the original optimal control problem in~\eqref{eq:ocp_cost}, we would set $\sigma_\eta=0$ and $\sigma_w = \sigma_0$.

\subsection{Robust Synthesis}

Using the argument for Proposition~\ref{prop:subopt}, we can modify the synthesis problem to take into account the limitations of the learned perception map.
We consider a learned perception map satisfying
\[\|\widehat h(z_k) - h(z_k)\|_\infty \leq \eps_h\]
as long as $\|Cx_k\|_\infty \leq r$.
Then by using the perception map as a virtual sensor, $y_k = \widehat h(z_k)$, we have that $y_k = Cx_k + \eta_k$ where $\|\eta_k\|_\infty\leq \eps_h$ as long as $Cx_k$ is bounded.

Then in order to enforce the constraint, 
\begin{align*}
\|C \tf x\|_\infty &\leq\|C\tf r\|_\infty + 
\|C(\tf x-\tf r)\|_\infty \\
&\leq r_{\max} + \|C\begin{bmatrix}I&0\end{bmatrix}\pmb \xi\|_\infty\\
&= r_{\max} + \|C\begin{bmatrix}I&0\end{bmatrix}
(\tfxiw \bar H \pmb \omega + \tfxin N \tf n) \|\\
&\leq \underbrace{ r_{\max} + \|C\begin{bmatrix}I&0\end{bmatrix}
\tfxiw \bar H\|_\lone }_{r_{\max}(\tf\Phi)} + \eps_h\|C\begin{bmatrix}I&0\end{bmatrix} \tfxin N \|_\lone \leq r
\end{align*}
where we make use of the boundedness of reference signals and derive a specific formula for $r_{\max}(\tf\Phi)$.

Therefore, the robust synthesis problem is given by
\begin{align}
\begin{split}
\min \quad& \left\|\begin{bmatrix} \bar Q^{1/2} & 0\\ 0 & R^{1/2}  \end{bmatrix} \fulltfxi \begin{bmatrix} \bar H & 0\\ 0 & \eps_h N \end{bmatrix} \right\|_\lone\\
\st \quad &\eqref{eq:sls_constraints}, ~~  \|C\begin{bmatrix}I&0\end{bmatrix}
\tfxiw \bar H\|_\lone + \eps_h\|C\begin{bmatrix}I&0\end{bmatrix} \tfxin N \|_\lone \leq r - r_{\max} 
\end{split}
\label{eq:of-sys-sls-tracking-rob}
\end{align}
The constraint highlights the necessity of achieving low errors $\eps_h$ within a sufficiently large radius $r$.
In recent work, \cite{dean2019robust} present a detailed discussion of $\lone$ controller synthesis for a closely related problem, including its implementation as a linear program. 
\section{Experiments}\label{sec:experiments}

To illustrate our results and to probe their limits, we perform experiments in two simulated environments: a simplified unmanned aerial vehicle (UAV) with a downward facing camera and an autonomous driving example with a dashboard mounted camera.
All code necessary for reproducing experiments is available at: \url{github.com/modestyachts/certainty_equiv_perception_control}.

\subsection{Setting and Training Data}
For both UAV and car environments, observations are $300\times 400$ pixel RGB images generated using the CARLA simulator~\citep{carla}. 
For the UAV setting, we fix the height and orientation of the camera so that it faces downward from an elevation of 40m.
For the autonomous driving setting, the height is fixed to ground level and the orientation is determined by the car's velocity.
Figure~\ref{fig:example_obs} shows example observations.

\def \basefigwidth{0.49}

\begin{figure}[t]
\centering
\begin{subfigure}[t]{0.24\textwidth}
\caption{\small UAV sampling}\label{fig:uav_sampling}
\centerline{\includegraphics[width=\columnwidth]{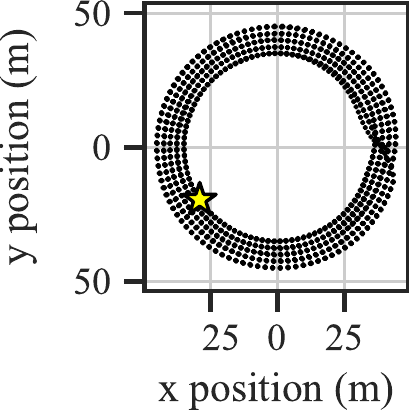}}
\end{subfigure}\qquad
\begin{subfigure}[t]{0.45\textwidth}
\caption{\small Example observations}\label{fig:example_obs}
\centerline{\includegraphics[width=0.49\columnwidth]{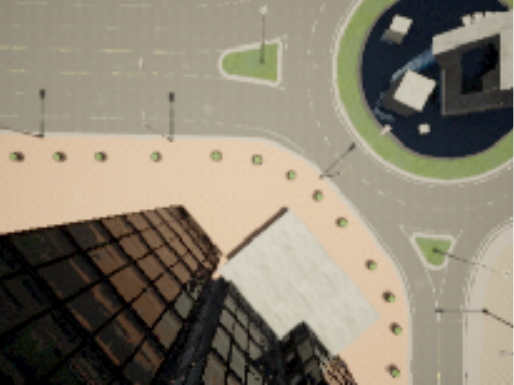}~\includegraphics[width=0.49\columnwidth]{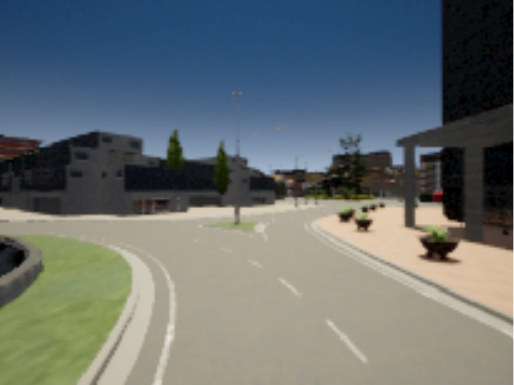}}
\end{subfigure}
\begin{subfigure}[t]{0.24\textwidth}
\caption{\small Car sampling}\label{fig:car_sampling}
\centerline{\includegraphics[width=\columnwidth]{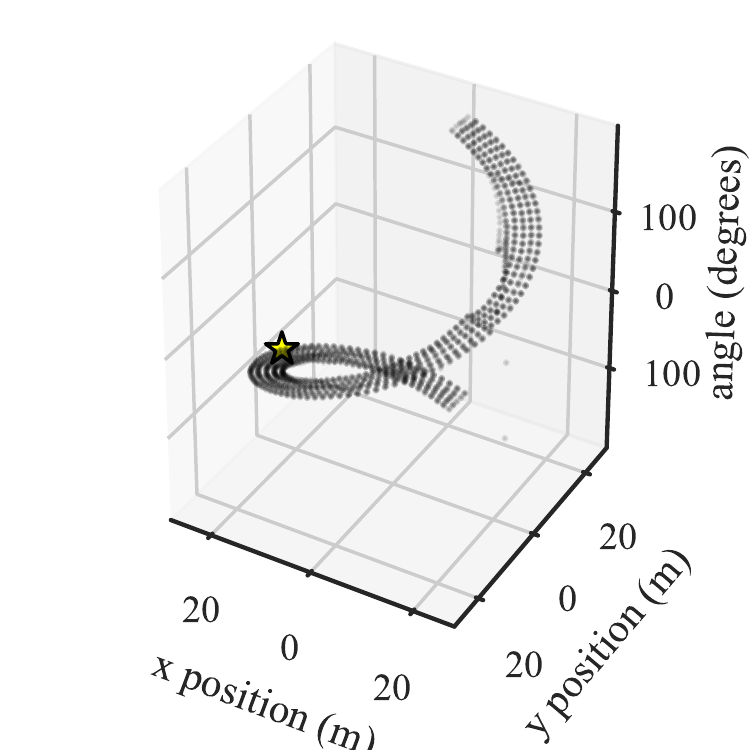}}
\end{subfigure}
\caption{Coverage of training data for (a) UAV and (c) autonomous driving settings. In (b), example observations taken from positions indicated by yellow stars. 
}\label{fig:references}
\end{figure}

In both cases, system dynamics are defined using a hovercraft model, where positions along east-west and north-south axes evolve independently according to double integrator dynamics.
The hovercraft model is specified by 
\begin{align}\label{eq:hovercraft}
A= \begin{bmatrix} 1 & 0.1 & 0 & 0 \\ 0 & 1 & 0&0\\ 0 & 0 &1 & 0.1\\0&0& 0 & 1\end{bmatrix},\quad B=\begin{bmatrix} 0&0\\ 1&0 \\ 0&0\\0&1\end{bmatrix},\quad C= \begin{bmatrix} 1&0 &0&0 \\ 0&0 &1&0\end{bmatrix}\:,
\end{align}
so that $x^{(1)}$ and $x^{(3)}$ respectively represent position along east and north axes, while $x^{(2)}$ and $x^{(4)}$ represent the corresponding velocities.
For the UAV, the rendered image depends only on the position, with a scale factor of 20 to convert into meters in CARLA coordinates.
For the car example, observations are determined as a function of vehicle pose, and thus additionally depend on the heading angle. 
We defined the heading to be $\arctan(x^{(4)}/x^{(2)})$, so the rendered image depends on the position as well as velocity state variables. For the driving setting, the scale factor is 11.

We construct training trajectories by applying a static reference tracking controller to trace circles of varying radius.
For the training phase, the noisy sensor as gives rise to measurements
$y_k^\train = Cx_k + \eta_k$
where $\eta_k$ is generated by clipping a Gaussian with standard deviation $0.01$ between $-1$ and $1$.
We use the static reference tracking controller:
\[u^\reff_k = K(\widehat x_k - C^\dagger y^\reff_k),\]
where the state estimate is computed from $y_{0:k}^\train$ as in Example~\ref{ex:static_lti}.
The feedback parameter $K$ is generated as the optimal LQR controller when costs are specified as $Q=C^\top C$ and $R = I$ while the estimation parameter $L$ is generated as the optimal infinite horizon Kalman filter for process noise with covariance $W=I$ and measurement noise with covariance $V=0.1\cdot I$.
We use the periodic reference
\[y^\reff_k = \begin{bmatrix}a_k\sin(2\pi k/100) & a_k\cos(2\pi k/100)\end{bmatrix}^\top,\quad a_k = 1.75 + 0.125 \left(\lfloor k/100 \rfloor \mod 4\right)\]
which traces circles counterclockwise, ranging in radius from $1.75$ to $2.25$, or 35-45m for the UAV and 19.25-24.75m for the car.
We collect training data for $0\leq k\leq T=2,000$.
Figure~\ref{fig:uav_sampling} and~\ref{fig:car_sampling} display the positions from which training observations and measurements are collected.
Notice that for the car, this strategy results in a sparsely sampled measurement subspace.

\begin{figure}[t]
\centering
\begin{subfigure}[b]{0.75\textwidth}
\caption{\small Perception errors over region}\label{fig:heatmap_errors}
\includegraphics[width=0.95\columnwidth]{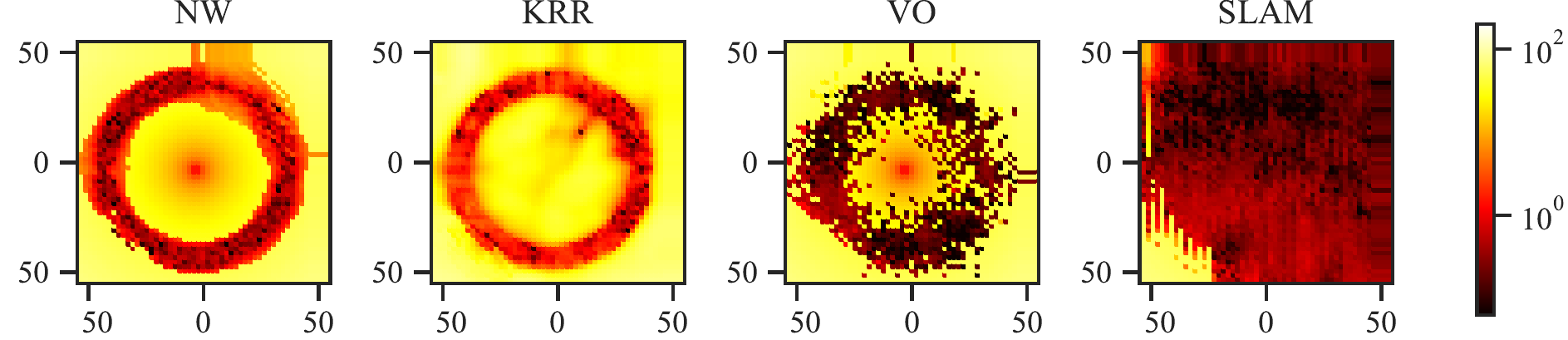}
\end{subfigure}~~~~
\begin{subfigure}[b]{0.25\textwidth}
\caption{\small Perception errors}\label{fig:bar_errors}
\centerline{\includegraphics[width=\columnwidth]{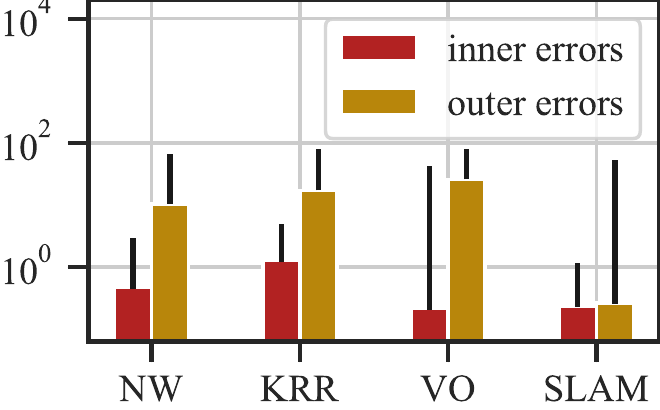}}
\end{subfigure}
\caption{In (a), heatmaps illustrate perception errors. In (b), median and 99th percentile errors within the inner (37-42m radius) and outer (25-55m radius, excluding inner) regions of training data.}\label{figure:errors}
\end{figure}

\subsection{Perception Error Evaluation}

We consider four types of perception predictors:
\begin{itemize}
	\item {\bf Nadarya-Watson (NW):} The estimated position is computed based on training data as in~\eqref{eq:regression_estimator} where $\rho(z, z_t) = \|z-z_t\|_2$ is the $\ell_2$ distance on raw pixels. The only hyperparameter is the bandwidth $\gamma$.

	We investigated a few additional metrics based on $\ell_2$ distance between \emph{transformed pixels}, but did not observe any benefits. The transformations we considered were: 
	pretrained Resnet-18 features,\footnote{\url{https://github.com/christiansafka/img2vec}}
	Guassian blur,\footnote{\url{https://scikit-image.org/docs/stable/api/skimage.filters.html\#skimage.filters.gaussian}}  
	and edge detection-type filters.\footnote{\url{https://scikit-image.org/docs/stable/api/skimage.filters.html}}
	\item {\bf Kernel Ridge Regression (KRR):} The estimated position is computed as
	\[\widehat h(z) = \begin{bmatrix} (y^\train_0)^\top&\dots&(y^\train_T)^\top\end{bmatrix} (\lambda I + K)^{-1} \begin{bmatrix} k(z_0, z) & \dots & k(z_T, z)\end{bmatrix}^\top \]
	where the kernel matrix $K$ is defined from the training data with $K_{ij}=k(z_i, z_j)$ and we use the radial basis function kernel, $k(z,z') = e^{-\alpha\|z-z'\|_2^2}$. The hyperparameters are regularization $\lambda$ and spread $\alpha$.
	\item {\bf Visual Odometry (VO):} This structured method relies on a database of training images with known poses, which we construct using ORB-SLAM~\citep{murTRO2015}, and calibrate to the world frame using position measurements $\{y^\train_t\}$ which determine scale and orientation.
	We use only the first 200 training datapoints to initialize this database.

	New observations $z$ are matched with an image $z_t$ in the training data based on ``bag of words'' image descriptor comparison.
	Then, the camera pose is estimated to minimize the reprojection error between matched features in $z$ and $z_t$, and from this we extract the position estimate $\widehat h(z)$.
	\item {\bf Simultaneous Localization and Mapping (SLAM):} This structured method proceeds very similarly to the VO method described above, with two key differences.
	First, all new observations $z$ are added to the database along with training images.
	Second, pose estimates are initialized based on the previous frame, resulting in a temporal component.
	We implement this method by running ORB-SLAM online.
\end{itemize}
These additional methods allow for comparison with a classical nonparametric approach, a classical computer vision approach, and a non-static state-of-the-art approach.

We evaluate the learned perception maps on a grid of $2,500$ points, with $-2.5\leq x^{(1)}, x^{(3)}\leq 2.5$.
For the car, we set the angle of the grid points as $\arctan(-x^{(1)}/x^{(3)})$ to mimic counter-clockwise driving, which results in a best-case evaluation of the learned perception components.
For SLAM evaluation, the ordering of each evaluated point matters. 
We visit the grid points by scanning vertically, alternating upwards and downwards traversal.
For NW and KRR, we choose hyperparameters which result in low errors within the region covered by training samples. In the UAV setting, we used $\gamma=25,000$, $\alpha=10^{-9}$, and $\lambda=0.111$. 
In the car setting, we used $\gamma=16666.67$.

The resulting errors are displayed for the UAV in
Figure~\ref{fig:heatmap_errors} and for the car in Figure~\ref{fig:heatmap_errors_car}.
The error heatmaps are relatively similar for the three static regressors, with small errors within the training data coverage and larger errors outside of it.
Though VO has very small errors at many points, its heat map looks much noisier. The large errors come from failures of feature matching within a database of key frames from the training data; in contrast, NW and KRR predictions are closely related to $\ell_2$ distance between pixels, leading to smoother errors.
Because SLAM performs mapping online, it can leverage the unlabelled evaluation data to build out good perception away from training samples.
In the UAV setting, SLAM has high errors only due to a tall building obstructing the camera view, visible in Figure~\ref{fig:example_obs}.
Similarly, in the driving setting, a wall and a statue obstruct the view in the locations that SLAM exhibits large errors.
Figure~\ref{fig:bar_errors} and~\ref{fig:bar_errors_car} summarize the evaulations by plotting the median and 99th percentile errors in the \emph{inner region} of training coverage compared with the \emph{outer region}.

\begin{figure}
\centering
\begin{subfigure}[t]{0.7\textwidth}
\caption{\small Perception errors over region}\label{fig:heatmap_errors_car}
\includegraphics[width=0.95\columnwidth]{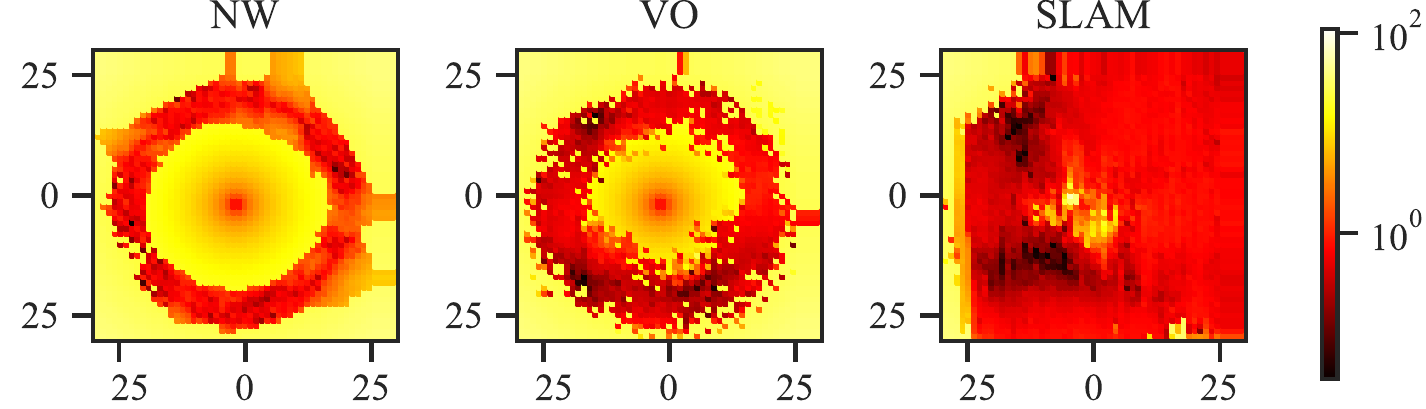}
\end{subfigure}~~~~
\begin{subfigure}[t]{0.3\textwidth}
\caption{\small Perception errors}\label{fig:bar_errors_car}
\centerline{\includegraphics[width=\columnwidth]{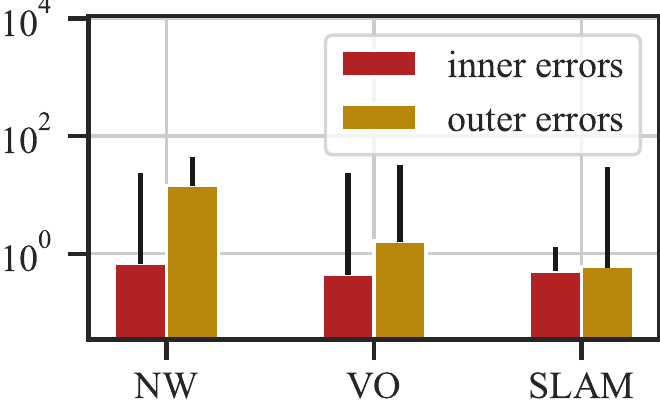}}
\end{subfigure}
\caption{In (a), heatmaps illustrate perception errors. In (b), median and 99th percentile errors within the inner (20.35-23.1m radius) and outer (13.75-30.25m radius, excluding inner) regions of training data.}\label{figure:errors_car}
\end{figure}

\subsection{Closed-Loop Operation}

Finally, we evaluate the closed-loop performance of the predictors by using them for a reference tracking task.
We consider reference trajectories of the form
\begin{align}\label{eq:reference_exp}
y^\reff_k = \begin{bmatrix}a \sin(2\pi k/100) & 2\cos(2\pi k/100)\end{bmatrix}^\top
\end{align}
and a static reference tracking controller which has the same parameters as the one used to collect training data.

We first examine how changes to the reference trajectory lead to failures in Figure~\ref{fig:trajectories_uav}.
For the UAV setting, NW with dense sampling is sufficient to allow for good tracking of the reference with $a=1.9$, but tracking fails when trying to track a reference outside the sampled region with $a=1.6$.
As the reference signal drives the system into a sparsely sampled region, errors increase until eventually the UAV exits the region covered by training data.
Once this occurs, the system loses stability due to the perception failure.
Despite briefly regaining perception, the system does not recover.

The autonomous driving setting illustrates failure modes that arise for nonparametric predictors when training data is not dense.
The trajectories in Figure~\ref{fig:trajectories} show that even though the reference is contained within the well-sampled region ($a=1.9$), NW experiences failures due to neglecting to densely sample with respect to angle.
Errors cause the system to deviate from the reference, and eventually the system exits the region covered by training data, losing stability due to the perception failure.
SLAM does not have this problem.

\begin{figure}[b]
    \centering
	\includegraphics[height=7em]{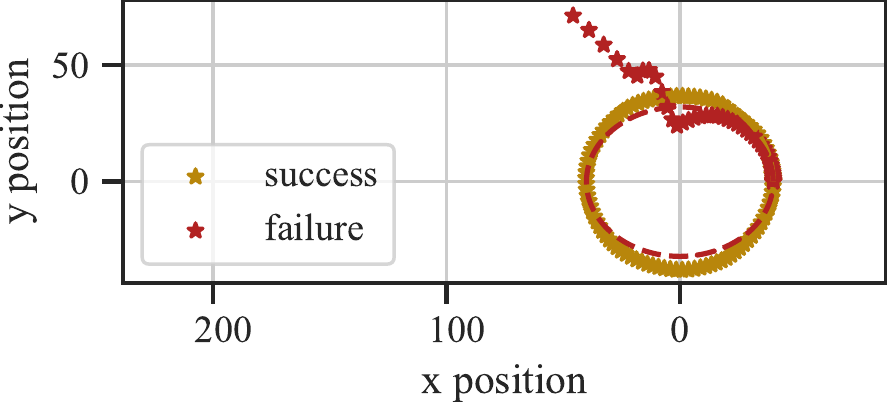}~~~~
	\includegraphics[height=7em]{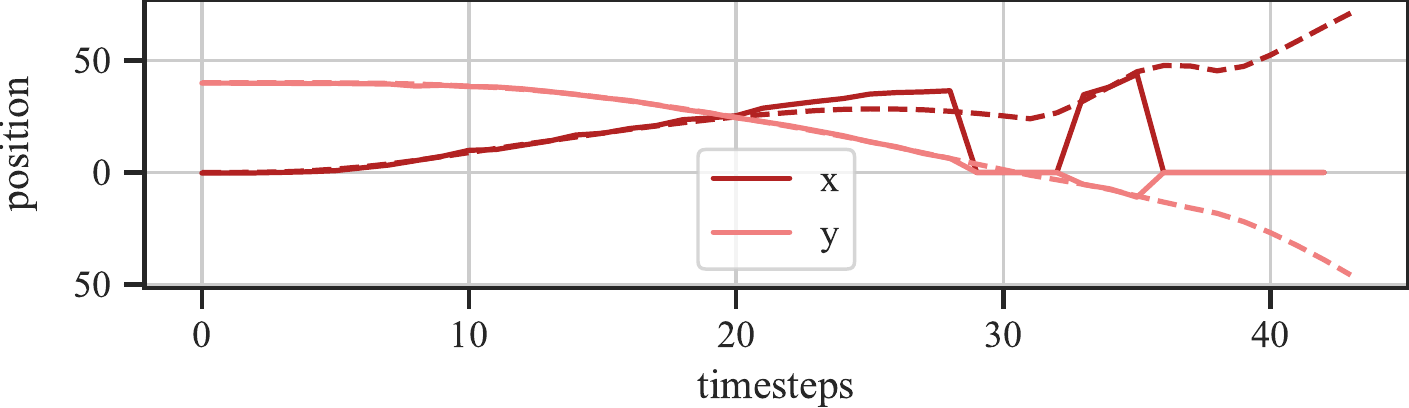}
    \caption{Two different reference trajectories for the UAV with NW perception lead to a success and a failure. Left, reference and actual trajectories. Right, predicted (solid) and actual (dashed) positions for the failed tracking example.}
	\label{fig:trajectories_uav}
\end{figure}

\begin{figure}[t]
    \centering
	\includegraphics[height=7em]{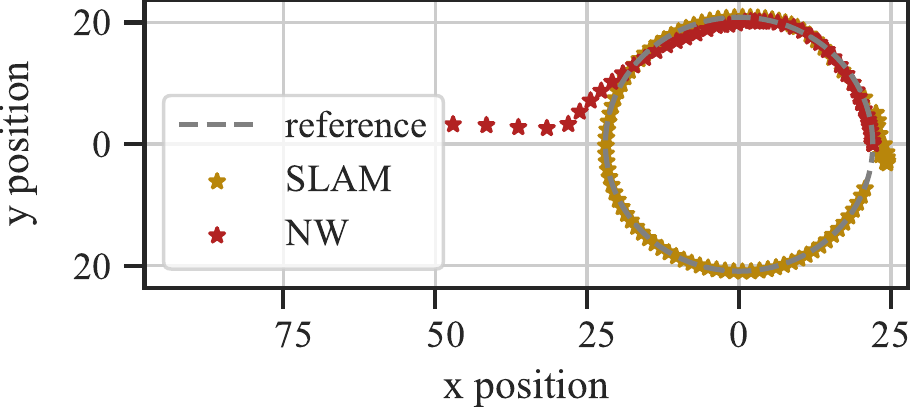}~~~~
	\includegraphics[height=7em]{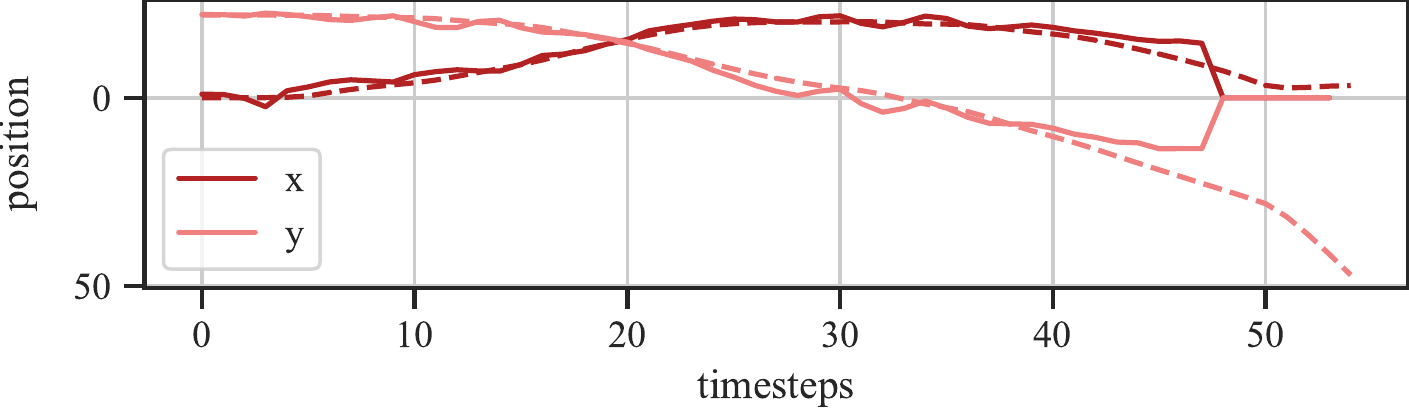}
    \caption{Two different predictors lead to a success and a failure of reference tracking for the car. Left, reference and actual trajectories for NW and SLAM. Right, predicted (solid) and actual (dashed) positions for NW.}
	\label{fig:trajectories}
\end{figure}

\section{Conclusion and Future Work}

We have presented a sample complexity guarantee for the task of using a complex and unmodelled sensor for waypoint tracking.
Our method makes use of noisy measurements from an additional sensor to both learn an inverse {perception} map and collect training data.
We show that evenly sampling the measurement subspace is sufficient for ensuring the success of nonparametric regression, and furthermore that using this learned component in closed-loop has bounded sub-optimality.
Unlike related work that focuses on learning unknown dynamics, the task we consider incorporates both a lack of direct state observation and nonlinearity, making it
relevant to modern robotic systems.

We hope that future work will continue to probe this problem setting 
to rigorously explore relevant trade-offs.
One direction for future work is to contend with the sampling burden by collecting data in a more goal-oriented manner, perhaps with respect to a target task or the continuity of the observation map.
It is also of interest to consider extensions which do not rely on the supervision of a noisy sensor or make clearer use of the structure induced by the dynamics on the observations.
Making closer connections with modern computer vision methods like SLAM could lead to insights about unsupervised and semi-supervised learning, particularly when data has known structure.

\subsubsection*{Acknowledgments and Disclosure of Funding}
We thank Horia Mania, Pavlo Manovi, and Vickie Ye for helpful comments and assistance with experiments.
This research is generously supported in part by ONR awards N00014-17-1-2191, N00014-17-1-2401, and N00014-18-1-2833, NSF CPS award 1931853, and the DARPA Assured Autonomy program (FA8750-18-C-0101).
SD is supported by an NSF Graduate Research Fellowship under Grant No. DGE 1752814

\bibliographystyle{plainnat}
\bibliography{refs}

\begin{thebibliography}{48}
\providecommand{\natexlab}[1]{#1}
\providecommand{\url}[1]{\texttt{#1}}
\expandafter\ifx\csname urlstyle\endcsname\relax
  \providecommand{\doi}[1]{doi: #1}\else
  \providecommand{\doi}{doi: \begingroup \urlstyle{rm}\Url}\fi

\bibitem[Abbasi-Yadkori and Szepesv{\'a}ri(2011)]{abbasi2011regret}
Yasin Abbasi-Yadkori and Csaba Szepesv{\'a}ri.
\newblock Regret bounds for the adaptive control of linear quadratic systems.
\newblock In \emph{Proceedings of the 24th Annual Conference on Learning
  Theory}, pages 1--26, 2011.

\bibitem[Abbasi-Yadkori et~al.(2011)Abbasi-Yadkori, P{\'a}l, and
  Szepesv{\'a}ri]{abbasi2011online}
Yasin Abbasi-Yadkori, D{\'a}vid P{\'a}l, and Csaba Szepesv{\'a}ri.
\newblock Online least squares estimation with self-normalized processes: An
  application to bandit problems.
\newblock \emph{arXiv preprint arXiv:1102.2670}, 2011.

\bibitem[Abbasi-Yadkori et~al.(2019)Abbasi-Yadkori, Lazic, and
  Szepesv{\'a}ri]{abbasi2019model}
Yasin Abbasi-Yadkori, Nevena Lazic, and Csaba Szepesv{\'a}ri.
\newblock Model-free linear quadratic control via reduction to expert
  prediction.
\newblock In \emph{The 22nd International Conference on Artificial Intelligence
  and Statistics}, pages 3108--3117, 2019.

\bibitem[Anderson and Matni(2017)]{anderson2017structured}
James Anderson and Nikolai Matni.
\newblock Structured state space realizations for sls distributed controllers.
\newblock In \emph{2017 55th Annual Allerton Conference on Communication,
  Control, and Computing (Allerton)}, pages 982--987. IEEE, 2017.

\bibitem[Anderson et~al.(2019)Anderson, Doyle, Low, and Matni]{anderson2019}
James Anderson, John~C Doyle, Steven Low, and Nikolai Matni.
\newblock System level synthesis.
\newblock \emph{arXiv preprint arXiv:1904.01634}, 2019.

\bibitem[Barber et~al.(2019)Barber, Candes, Ramdas, and
  Tibshirani]{barber2019limits}
Rina~Foygel Barber, Emmanuel~J Candes, Aaditya Ramdas, and Ryan~J Tibshirani.
\newblock The limits of distribution-free conditional predictive inference.
\newblock \emph{arXiv preprint arXiv:1903.04684}, 2019.

\bibitem[Bennett(1962)]{bennett1962probability}
George Bennett.
\newblock Probability inequalities for the sum of independent random variables.
\newblock \emph{Journal of the American Statistical Association}, 57\penalty0
  (297):\penalty0 33--45, 1962.

\bibitem[Berkenkamp et~al.(2017)Berkenkamp, Turchetta, Schoellig, and
  Krause]{berkenkamp2017safe}
Felix Berkenkamp, Matteo Turchetta, Angela Schoellig, and Andreas Krause.
\newblock Safe model-based reinforcement learning with stability guarantees.
\newblock In \emph{Advances in neural information processing systems}, pages
  908--918, 2017.

\bibitem[Boczar et~al.(2018)Boczar, Matni, and Recht]{boczar2018finite}
Ross Boczar, Nikolai Matni, and Benjamin Recht.
\newblock Finite-data performance guarantees for the output-feedback control of
  an unknown system.
\newblock In \emph{2018 IEEE Conference on Decision and Control (CDC)}, pages
  2994--2999. IEEE, 2018.

\bibitem[Cheng et~al.(2019)Cheng, Orosz, Murray, and Burdick]{cheng2019end}
Richard Cheng, G{\'a}bor Orosz, Richard~M Murray, and Joel~W Burdick.
\newblock End-to-end safe reinforcement learning through barrier functions for
  safety-critical continuous control tasks.
\newblock \emph{arXiv preprint arXiv:1903.08792}, 2019.

\bibitem[Chernozhukov et~al.(2018)Chernozhukov, Wuthrich, and
  Zhu]{chernozhukov2018exact}
Victor Chernozhukov, Kaspar Wuthrich, and Yinchu Zhu.
\newblock Exact and robust conformal inference methods for predictive machine
  learning with dependent data.
\newblock \emph{arXiv preprint arXiv:1802.06300}, 2018.

\bibitem[Codevilla et~al.(2018)Codevilla, Miiller, L{\'o}pez, Koltun, and
  Dosovitskiy]{codevilla2018end}
Felipe Codevilla, Matthias Miiller, Antonio L{\'o}pez, Vladlen Koltun, and
  Alexey Dosovitskiy.
\newblock End-to-end driving via conditional imitation learning.
\newblock In \emph{2018 IEEE International Conference on Robotics and
  Automation (ICRA)}, pages 1--9. IEEE, 2018.

\bibitem[{Dahleh} and {Pearson}(1987)]{dahleh19871}
M.~{Dahleh} and J.~{Pearson}.
\newblock $\ell^{1}$-optimal feedback controllers for {MIMO} discrete-time
  systems.
\newblock \emph{IEEE Transactions on Automatic Control}, 32\penalty0
  (4):\penalty0 314--322, April 1987.
\newblock ISSN 0018-9286.
\newblock \doi{10.1109/TAC.1987.1104603}.

\bibitem[Dean et~al.(2017)Dean, Mania, Matni, Recht, and Tu]{dean2017sample}
Sarah Dean, Horia Mania, Nikolai Matni, Benjamin Recht, and Stephen Tu.
\newblock On the sample complexity of the linear quadratic regulator.
\newblock \emph{arXiv preprint arXiv:1710.01688}, 2017.

\bibitem[Dean et~al.(2018)Dean, Mania, Matni, Recht, and Tu]{dean2018regret}
Sarah Dean, Horia Mania, Nikolai Matni, Benjamin Recht, and Stephen Tu.
\newblock Regret bounds for robust adaptive control of the linear quadratic
  regulator.
\newblock In \emph{Advances in Neural Information Processing Systems}, pages
  4192--4201, 2018.

\bibitem[Dean et~al.(2019{\natexlab{a}})Dean, Matni, Recht, and
  Ye]{dean2019robust}
Sarah Dean, Nikolai Matni, Benjamin Recht, and Vickie Ye.
\newblock Robust guarantees for perception-based control.
\newblock \emph{arXiv preprint arXiv:1907.03680}, 2019{\natexlab{a}}.

\bibitem[Dean et~al.(2019{\natexlab{b}})Dean, Tu, Matni, and
  Recht]{dean2019safely}
Sarah Dean, Stephen Tu, Nikolai Matni, and Benjamin Recht.
\newblock Safely learning to control the constrained linear quadratic
  regulator.
\newblock In \emph{2019 American Control Conference (ACC)}, pages 5582--5588.
  IEEE, 2019{\natexlab{b}}.

\bibitem[Dean and Recht(2020)]{dean2020full}
Ssarah Dean and Benjamin Recht.
\newblock Certainty-equivalent perception-based control.
\newblock \emph{arXiv preprint arXiv:2008.12332}, 2020.

\bibitem[Devroye(1978)]{devroye1978uniform}
Luc~P Devroye.
\newblock The uniform convergence of the nadaraya-watson regression function
  estimate.
\newblock \emph{Canadian Journal of Statistics}, 6\penalty0 (2):\penalty0
  179--191, 1978.

\bibitem[Dosovitskiy et~al.(2017)Dosovitskiy, Ros, Codevilla, Lopez, and
  Koltun]{carla}
Alexey Dosovitskiy, German Ros, Felipe Codevilla, Antonio Lopez, and Vladlen
  Koltun.
\newblock Carla: An open urban driving simulator.
\newblock \emph{arXiv preprint arXiv:1711.03938}, 2017.

\bibitem[Greblicki(1997)]{greblicki1997nonparametric}
W{\l}odzimierz Greblicki.
\newblock Nonparametric approach to wiener system identification.
\newblock \emph{IEEE Transactions on Circuits and Systems I: Fundamental Theory
  and Applications}, 44\penalty0 (6):\penalty0 538--545, 1997.

\bibitem[Hansen(2008)]{hansen2008uniform}
Bruce~E Hansen.
\newblock Uniform convergence rates for kernel estimation with dependent data.
\newblock \emph{Econometric Theory}, 24\penalty0 (3):\penalty0 726--748, 2008.

\bibitem[Hasiewicz(1987)]{hasiewicz1987identification}
Z~Hasiewicz.
\newblock Identification of a linear system observed through zero-memory
  non-linearity.
\newblock \emph{International Journal of Systems Science}, 18\penalty0
  (9):\penalty0 1595--1607, 1987.

\bibitem[Lacy and Bernstein(2001)]{lacy2001subspace}
Seth~L Lacy and Dennis~S Bernstein.
\newblock Subspace identification for nonlinear systems that are linear in
  unmeasured states.
\newblock In \emph{Proceedings of the 40th IEEE Conference on Decision and
  Control (Cat. No. 01CH37228)}, volume~4, pages 3518--3523. IEEE, 2001.

\bibitem[Laine et~al.(2020)Laine, Chiu, and Tomlin]{laine2020eyes}
Forrest Laine, Chiu-Yuan Chiu, and Claire Tomlin.
\newblock Eyes-closed safety kernels: Safety for autonomous systems under loss
  of observability.
\newblock \emph{arXiv preprint arXiv:2005.07144}, 2020.

\bibitem[Lambert et~al.(2018)Lambert, Shaban, Raj, Liu, and
  Boots]{lambert2018deep}
Alexander Lambert, Amirreza Shaban, Amit Raj, Zhen Liu, and Byron Boots.
\newblock Deep forward and inverse perceptual models for tracking and
  prediction.
\newblock In \emph{2018 IEEE International Conference on Robotics and
  Automation (ICRA)}, pages 675--682. IEEE, 2018.

\bibitem[Lei and Wasserman(2014)]{lei2014distribution}
Jing Lei and Larry Wasserman.
\newblock Distribution-free prediction bands for non-parametric regression.
\newblock \emph{Journal of the Royal Statistical Society: Series B (Statistical
  Methodology)}, 76\penalty0 (1):\penalty0 71--96, 2014.

\bibitem[Liero(1989)]{liero1989strong}
Hannelore Liero.
\newblock Strong uniform consistency of nonparametric regression function
  estimates.
\newblock \emph{Probability theory and related fields}, 82\penalty0
  (4):\penalty0 587--614, 1989.

\bibitem[Liu et~al.(2019)Liu, Shi, Chung, Anandkumar, and Yue]{liu2019robust}
Anqi Liu, Guanya Shi, Soon-Jo Chung, Anima Anandkumar, and Yisong Yue.
\newblock Robust regression for safe exploration in control.
\newblock \emph{arXiv preprint arXiv:1906.05819}, 2019.

\bibitem[Mania et~al.(2019)Mania, Tu, and Recht]{mania2019certainty}
Horia Mania, Stephen Tu, and Benjamin Recht.
\newblock Certainty equivalence is efficient for linear quadratic control.
\newblock In \emph{Advances in Neural Information Processing Systems}, pages
  10154--10164, 2019.

\bibitem[Mania et~al.(2020)Mania, Jordan, and Recht]{mania2020active}
Horia Mania, Michael~I. Jordan, and Benjamin Recht.
\newblock Active learning for nonlinear system identification with guarantees.
\newblock \emph{arXiv preprint arXiv:2006.10277}, 2020.

\bibitem[Matni et~al.(2017)Matni, Wang, and Anderson]{matni2017scalable}
Nikolai Matni, Yuh-Shyang Wang, and James Anderson.
\newblock Scalable system level synthesis for virtually localizable systems.
\newblock In \emph{2017 IEEE 56th Annual Conference on Decision and Control
  (CDC)}, pages 3473--3480. IEEE, 2017.

\bibitem[Mhammedi et~al.(2020)Mhammedi, Foster, Simchowitz, Misra, Sun,
  Krishnamurthy, Rakhlin, and Langford]{mhammedi2020learning}
Zakaria Mhammedi, Dylan~J Foster, Max Simchowitz, Dipendra Misra, Wen Sun,
  Akshay Krishnamurthy, Alexander Rakhlin, and John Langford.
\newblock Learning the linear quadratic regulator from nonlinear observations.
\newblock \emph{arXiv preprint arXiv:2010.03799}, 2020.

\bibitem[Misra et~al.(2020)Misra, Henaff, Krishnamurthy, and
  Langford]{misra2020kinematic}
Dipendra Misra, Mikael Henaff, Akshay Krishnamurthy, and John Langford.
\newblock Kinematic state abstraction and provably efficient rich-observation
  reinforcement learning.
\newblock In \emph{International conference on machine learning}, pages
  6961--6971. PMLR, 2020.

\bibitem[Mur-Artal and Tard\'os(2015)]{murTRO2015}
Montiel J. M.~M. Mur-Artal, Ra\'ul and Juan~D. Tard\'os.
\newblock {ORB-SLAM}: a versatile and accurate monocular {SLAM} system.
\newblock \emph{IEEE Transactions on Robotics}, 31\penalty0 (5):\penalty0
  1147--1163, 2015.
\newblock \doi{10.1109/TRO.2015.2463671}.

\bibitem[Nakka et~al.(2020)Nakka, Liu, Shi, Anandkumar, Yue, and
  Chung]{nakka2020chance}
Yashwanth~Kumar Nakka, Anqi Liu, Guanya Shi, Anima Anandkumar, Yisong Yue, and
  Soon-Jo Chung.
\newblock Chance-constrained trajectory optimization for safe exploration and
  learning of nonlinear systems.
\newblock \emph{arXiv preprint arXiv:2005.04374}, 2020.

\bibitem[Salhi and Kamoun(2016)]{salhi2016combined}
Houda Salhi and Samira Kamoun.
\newblock Combined parameter and state estimation algorithms for multivariable
  nonlinear systems using mimo wiener models.
\newblock \emph{Journal of Control Science and Engineering}, 2016, 2016.

\bibitem[Schoukens and Tiels(2016)]{schoukens2016identification}
Maarten Schoukens and Koen Tiels.
\newblock Identification of nonlinear block-oriented systems starting from
  linear approximations: A survey.
\newblock \emph{CoRR, abs/1607.01217}, 2016.

\bibitem[Simchowitz and Foster(2020)]{simchowitz2020naive}
Max Simchowitz and Dylan~J Foster.
\newblock Naive exploration is optimal for online lqr.
\newblock \emph{arXiv preprint arXiv:2001.09576}, 2020.

\bibitem[Simchowitz et~al.(2019)Simchowitz, Boczar, and
  Recht]{simchowitz2019learning}
Max Simchowitz, Ross Boczar, and Benjamin Recht.
\newblock Learning linear dynamical systems with semi-parametric least squares.
\newblock \emph{arXiv preprint arXiv:1902.00768}, 2019.

\bibitem[Simchowitz et~al.(2020)Simchowitz, Singh, and
  Hazan]{simchowitz2020improper}
Max Simchowitz, Karan Singh, and Elad Hazan.
\newblock Improper learning for non-stochastic control.
\newblock \emph{arXiv preprint arXiv:2001.09254}, 2020.

\bibitem[Tang et~al.(2018)Tang, W{\"u}est, and Kumar]{tang2018aggressive}
Sarah Tang, Valentin W{\"u}est, and Vijay Kumar.
\newblock Aggressive flight with suspended payloads using vision-based control.
\newblock \emph{IEEE Robotics and Automation Letters}, 3\penalty0 (2):\penalty0
  1152--1159, 2018.

\bibitem[Taylor et~al.(2019)Taylor, Dorobantu, Le, Yue, and
  Ames]{taylor2019episodic}
Andrew~J Taylor, Victor~D Dorobantu, Hoang~M Le, Yisong Yue, and Aaron~D Ames.
\newblock Episodic learning with control lyapunov functions for uncertain
  robotic systems.
\newblock \emph{arXiv preprint arXiv:1903.01577}, 2019.

\bibitem[Tiels and Schoukens(2014)]{tiels2014wiener}
Koen Tiels and Johan Schoukens.
\newblock Wiener system identification with generalized orthonormal basis
  functions.
\newblock \emph{Automatica}, 50\penalty0 (12):\penalty0 3147--3154, 2014.

\bibitem[Tu and Recht(2018)]{tu2018gap}
Stephen Tu and Benjamin Recht.
\newblock The gap between model-based and model-free methods on the linear
  quadratic regulator: An asymptotic viewpoint.
\newblock \emph{arXiv preprint arXiv:1812.03565}, 2018.

\bibitem[Wabersich and Zeilinger(2018)]{wabersich2018linear}
Kim~P Wabersich and Melanie~N Zeilinger.
\newblock Linear model predictive safety certification for learning-based
  control.
\newblock In \emph{2018 IEEE Conference on Decision and Control (CDC)}, pages
  7130--7135. IEEE, 2018.

\bibitem[Wang et~al.(2016)Wang, Matni, and Doyle]{SysLevelSyn1}
Yuh-Shyang Wang, Nikolai Matni, and John~C Doyle.
\newblock A system level approach to controller synthesis.
\newblock \emph{arXiv preprint arXiv:1610.04815}, 2016.

\bibitem[Wigren(1994)]{wigren1994convergence}
Torbj{\"o}rn Wigren.
\newblock Convergence analysis of recursive identification algorithms based on
  the nonlinear wiener model.
\newblock \emph{IEEE Transactions on Automatic Control}, 39\penalty0
  (11):\penalty0 2191--2206, 1994.

\end{thebibliography}
\newpage
\appendix
\section{Technical Results} \label{app:proofs}

In this section, we provide proofs for the results presented in the body of the paper, as well as some intermediate lemmas.

\ifpreprint
\else
\subsection{Learning Results}
\fi

To begin, recall the pointwise error bound presented in Lemma~\ref{lem:regression_error}.
Its proof relies on the following concentration result, which allows us to handle training data collected from the closed-loop system.
Care is needed because $u_t$ depends on $y_{0:t}$, and a result, $z_t = g(x_t)$ will depend on previous noise variables $\eta_k$ for $k<t$.

\begin{lemma}[adapted from Corollary 1 in~\cite{abbasi2011online}]\label{lem:sum_filtration}
Let $(\mathcal F_t)_{t\geq 0}$ be a filtration, let $(V_t)_{t\geq 0}$ be a real-valued stochastic process adapted to $(\mathcal F_t)$, and let $(W_t)_{t\geq 1}$ be a real-valued martingale difference process adapted to $(\mathcal F_t)$. Assume that $W_t \mid \mathcal F_{t-1}$ is conditionally sub-Gaussian with parameter $\sigma$.
Then for all $T\geq 0$, with probability $1-\delta$,
\[\Big\|\sum_{k=1}^T V_t W_t\Big\|^2_2 \leq 2\sigma^2\log\left(\frac{\sqrt{\sum_{t=1}^T V_t^2}}{\delta}\right)\Big({1+\sum_{t=1}^T V_t^2}\Big)\:.\]
\end{lemma}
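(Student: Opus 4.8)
The plan is to run the standard self-normalized martingale argument (the ``method of mixtures''), following \citet{abbasi2011online}, specialized to the scalar setting. Throughout, write $S_t = \sum_{s=1}^t V_s W_s$ and $Q_t = \sum_{s=1}^t V_s^2$, with $S_0 = Q_0 = 0$, so that the claim is $S_T^2 \le 2\sigma^2(1+Q_T)\log\!\big(\sqrt{Q_T}/\delta\big)$ up to the harmless additive constant inside the square root.

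First I would establish the basic exponential supermartingale. Fix $\lambda\in\R$ and set $D_t^\lambda = \exp\!\big(\lambda S_t - \tfrac12\lambda^2\sigma^2 Q_t\big)$, with $D_0^\lambda = 1$. Since $V_t$ is $\mathcal F_{t-1}$-measurable and $W_t\mid\mathcal F_{t-1}$ is conditionally sub-Gaussian with parameter $\sigma$, we have $\E[\exp(\lambda V_t W_t)\mid\mathcal F_{t-1}] \le \exp(\tfrac12\lambda^2\sigma^2 V_t^2)$, and therefore $\E[D_t^\lambda\mid\mathcal F_{t-1}] = D_{t-1}^\lambda\,\E[\exp(\lambda V_t W_t - \tfrac12\lambda^2\sigma^2 V_t^2)\mid\mathcal F_{t-1}] \le D_{t-1}^\lambda$. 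Thus $(D_t^\lambda)_{t\ge 0}$ is a nonnegative supermartingale with $\E[D_t^\lambda]\le 1$.

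Next I would mix over $\lambda$. Let $\Lambda\sim\mathcal N(0,\tau^2)$ be independent of $(\mathcal F_t)$ and define $D_t = \E_\Lambda[D_t^\Lambda]$, the integral of $D_t^\lambda$ against the Gaussian density. A conditional Tonelli/Fubini argument shows $(D_t)_{t\ge 0}$ is again a nonnegative supermartingale with $\E[D_0] = 1$. Completing the square in $\lambda$ evaluates the integral in closed form:
\[
D_t = \frac{1}{\sqrt{1+\tau^2\sigma^2 Q_t}}\,\exp\!\left(\frac{\tau^2 S_t^2}{2\,(1+\tau^2\sigma^2 Q_t)}\right).
\]
Applying Ville's maximal inequality for nonnegative supermartingales gives $\Pr(\exists\, t\ge 0:\, D_t \ge 1/\delta)\le \E[D_0]\,\delta = \delta$. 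On the complementary event, $D_t < 1/\delta$ for all $t$ simultaneously, and taking logarithms and rearranging yields
\[
S_t^2 \le 2\Big(\tfrac{1}{\tau^2}+\sigma^2 Q_t\Big)\log\!\left(\frac{\sqrt{1+\tau^2\sigma^2 Q_t}}{\delta}\right)\qquad\text{for all } t\ge 0.
\]
Choosing $\tau^2 = 1/\sigma^2$ gives $S_T^2 \le 2\sigma^2(1+Q_T)\log\!\big(\sqrt{1+Q_T}/\delta\big)$, which is the stated bound.

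I do not anticipate a deep obstacle, as this is essentially a known result; the steps requiring care are (i) using the $\mathcal F_{t-1}$-measurability of $V_t$ to pull it inside the conditional sub-Gaussian MGF bound, (ii) justifying the interchange of conditional expectation with the $\lambda$-integral so that $(D_t)$ inherits the supermartingale property, and (iii) invoking a genuinely uniform-in-$t$ maximal inequality (Ville's inequality) rather than a fixed-$t$ Markov bound — this is precisely what lets the conclusion hold for every $T$ on a single high-probability event, which is the feature needed downstream to control perception error along closed-loop trajectories.
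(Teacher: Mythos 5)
Your argument is essentially correct, and it is the same argument that underlies the result the paper uses: the paper itself gives no proof of this lemma, importing it directly as Corollary~1 of \citet{abbasi2011online}, and your exponential-supermartingale plus Gaussian-mixture ("method of mixtures") derivation with Ville's maximal inequality is precisely the proof of that corollary specialized to scalars. Two caveats are worth making explicit. First, your key step $\E[\exp(\lambda V_t W_t)\mid \mathcal F_{t-1}]\leq \exp(\tfrac12\lambda^2\sigma^2 V_t^2)$ requires $V_t$ to be $\mathcal F_{t-1}$-measurable (predictable), whereas the lemma as stated only says $(V_t)$ is adapted to $(\mathcal F_t)$; the original reference does assume predictability, and the application in Lemma~\ref{lem:regression_error} (with $V_t=\ker_\gamma(z_t,z)$ a function of $x_t$ and $W_t=\eta_{t,i}$ fresh noise independent of $x_{0:t}$) satisfies it once the filtration is shifted appropriately, but your write-up silently switches from "adapted" to "predictable" and should say so. Second, your mixture calculation yields $\log\bigl(\sqrt{1+\sum_t V_t^2}/\delta\bigr)$ inside the bound rather than the stated $\log\bigl(\sqrt{\sum_t V_t^2}/\delta\bigr)$; this is not purely cosmetic in general, since the stated form can fail outright when $\sum_t V_t^2<\delta^2$ (the right-hand side is then negative), so your version is in fact the correct one, and the discrepancy is immaterial downstream because the proof of Lemma~\ref{lem:regression_error} only invokes the bound in the regime $s_T(z)\geq 1$, where the two forms differ by a bounded constant inside the logarithm. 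As a bonus, your Ville-based argument gives the bound uniformly over all $t$ on a single event, which is slightly stronger than the fixed-$T$ statement claimed.
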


\begin{proof}[Proof of Lemma~\ref{lem:regression_error}]
Recall that the error can be bounded by an approximation term and a noise term:
\begin{align*}
\|\widehat h(z) - h(z)\|_\infty
&= \Big\| \sum_{t=0}^T \frac{\ker_\gamma(z_t,z)}{s_T(z)}(Cx_t +\eta_t) -  Cx \Big\|_\infty\\
&\leq \Big\| \sum_{t=0}^T \frac{\ker_\gamma(z_t,z)}{s_T(z)}(Cx_t - Cx) \Big \|_\infty + \Big\| \sum_{t=0}^T \frac{\ker_\gamma(z_t,z)}{s_T(z)} \eta_t \Big\|_\infty\:.
\end{align*}

The approximation error  is bounded by the weighted average value of $\left\|Cx_t - Cx \right \|_\infty$ for points in the training set that are close to the current observation.
Using the continuity of the map $h$ from observations to labels, we have that 
\[\ker_\gamma(z_t,z)>0 \implies \rho(z_t, z) \leq \gamma 
\implies \|Cx_t-Cx\|_\infty\leq \gamma L_h\:. \]
This provides an upper bound on the average.

Turning to the measurement error term, we begin by noting that
\begin{align}
P\left(\Big\|\sum_{t=0}^T \frac{\ker_\gamma(z_t,z)}{s_T(z)} \eta_t\Big\|_\infty\geq s\right) &\leq
\sum_{i=1}^p
P\left( 
 \frac{1}{s_T(z)}\Big|\sum_{t=0}^T\ker_\gamma(z_t,z) \eta_{t,i}\Big| \geq s \right) \:.
\end{align}
We then apply Lemma~\ref{lem:sum_filtration} with $\mathcal F_t = x_{0:t}$, $V_t=\ker_\gamma(z_t,z)$, and $W_t = \eta_{t,i}$. Note that $\eta_{t,i}$ is $\sigma_\eta/2$ sub-Gaussian due to the fact that it is bounded.
Therefore, with probability at least $1-\delta/p^2$,
\begin{align*}
\frac{1}{s_T(z)}\Big|\sum_{t=0}^T\ker_\gamma(z_t,z) \eta_{t,i}\Big|
&\leq \frac{ \sigma_\eta}{2 s_T(z)} \sqrt{2\log\left(\frac{p^2}{\delta}\sqrt{\sum_{k=0}^T \ker_\gamma(z_t,z)^2}\right)({1+\sum_{k=0}^T \ker_\gamma(z_t,z)^2})}\\
&\labelrel\leq{lem1:leq_one}  \frac{\sigma_\eta}{2} \sqrt{2\log\left(\frac{p^2}{\delta}\sqrt{s_T(z)}\right)} \frac{\sqrt{{1+s_T(z)}}}{s_T(z)}
\labelrel\leq{lem1:frac}  \sigma_\eta \sqrt{\frac{\log\left(p^2\sqrt{s_T(z)}/\delta\right)}{s_T(z)}}
\end{align*}
Where \eqref{lem1:leq_one} holds since $\ker_\gamma\leq 1$, and \eqref{lem1:frac} since $s_T(z) \geq 1$ implies that $\frac{\sqrt{{1+s_T(z)}}}{s_T(z)}\leq \sqrt{\frac{2}{s_T(z)}}$.
Then with probability $1-\delta$,
\begin{align}
\Big\|\sum_{t=0}^T \frac{\ker_\gamma(z_t-z)}{s_T(z)} \eta_t\Big\|_\infty 
&\leq \sigma_\eta \sqrt{\frac{\log\left(p^2\sqrt{s_T(z)}/\delta\right)}{s_T(z)}} \:.
\end{align}
\end{proof}

Since Lemma~\ref{lem:regression_error} is only useful for bounding errors for a finite collection of points, we now prove a similar bound that holds over all points.
While it is possible to use the following result in a data-driven way (with $z_i=z_t$ and $H=T$), we will use it primarily for our data independent bound.

\begin{lemma}\label{lem:regression_error_all_pts}
Consider the setting of Lemma~\ref{lem:regression_error}.
Let $\{z_i\}_{i=1}^H$ be any arbitrary set of observations with $s_T(z_i)\neq 0$.
Then for all $z$, the
learned perception map has bounded errors with probability at least $1-\delta$:
\begin{align}
\begin{split}
\|\widehat h(z) - h(z)\|_\infty \leq &\gamma L_h + \min_i \frac{\sigma_\eta}{\sqrt{s_T(z_i)}} \sqrt{\log\left(p^2H\sqrt{s_T(z_i)}/\delta\right)} 
+   \frac{2\sigma_\eta T}{s_T(z_i)}\frac{L_\ker}{\gamma} \rho(z,z_i) 
\:.
\end{split}
\end{align}
\end{lemma}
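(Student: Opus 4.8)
The plan is to bound the error at an arbitrary point $z$ by comparing it to the error at one of the reference points $z_i$, paying a price for the discrepancy that is controlled by the Lipschitz continuity of the kernel. First I would fix an index $i$ and write, for any $z$ with $s_T(z_i) \neq 0$,
\[
\|\widehat h(z) - h(z)\|_\infty \leq \|\widehat h(z) - \widehat h(z_i)\|_\infty + \|\widehat h(z_i) - h(z_i)\|_\infty + \|h(z_i) - h(z)\|_\infty.
\]
Wait — this naive split is problematic because $\|h(z_i)-h(z)\|_\infty$ is only small when $z$ is close to $z_i$, which we cannot assume. So instead I would keep the decomposition of $\widehat h(z) - h(z)$ into its approximation term and noise term directly, exactly as in the proof of Lemma~\ref{lem:regression_error}. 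The approximation term is always bounded by $\gamma L_h$ regardless of $z$, since any training point $z_t$ with $\ker_\gamma(z_t,z)>0$ satisfies $\rho(z_t,z)\le\gamma$ and hence $\|Cx_t - Cx\|_\infty \le \gamma L_h$ by~\eqref{eq:smooth_assumption}. That handles the first summand; the work is all in the noise term.

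For the noise term, the key observation is that the noise contribution $N(z) := \sum_{t=0}^T \frac{\ker_\gamma(z_t,z)}{s_T(z)}\eta_t$ is a Lipschitz function of $z$ through the kernel weights, so I would compare $N(z)$ with $N(z_i)$. The route is: $N(z) = \frac{1}{s_T(z)}\sum_t \ker_\gamma(z_t,z)\eta_t$, and I would bound $\|N(z) - N(z_i)\|_\infty$ by estimating how much $\ker_\gamma(z_t,z)$ and $s_T(z)$ move as $z$ varies. Using $|\ker_\gamma(z_t,z) - \ker_\gamma(z_t,z_i)| \le \frac{L_\ker}{\gamma}\rho(z,z_i)$ (from the $L_\ker$-Lipschitzness of $\ker$ and dividing the argument by $\gamma$), together with $|\eta_t| \le \sigma_\eta$ — actually here I should be careful and bound the noise entries crudely by $\sigma_\eta$ since they are bounded — one gets that the numerator changes by at most $T \cdot \frac{L_\ker}{\gamma}\rho(z,z_i)\cdot\sigma_\eta$ in each coordinate. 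Combined with the denominator's behavior, routine algebra should yield the $\frac{2\sigma_\eta T}{s_T(z_i)}\frac{L_\ker}{\gamma}\rho(z,z_i)$ term; the extra factor of $2$ likely comes from handling the denominator $s_T(z)$ versus $s_T(z_i)$ uniformly. Then $\|N(z_i)\|_\infty$ is bounded via Lemma~\ref{lem:regression_error} applied at the point $z_i$, but with the failure probability scaled to $\delta/H$ so that a union bound over all $H$ points $z_i$ gives the simultaneous guarantee with total probability $1-\delta$; this replaces $\delta$ by $\delta/H$ inside the logarithm, producing the $\log(p^2 H \sqrt{s_T(z_i)}/\delta)$ factor. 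Since $i$ was arbitrary, I can take the minimum over $i$ at the end.

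The main obstacle I anticipate is getting the denominator comparison clean. The term $N(z)$ has $s_T(z)$ in the denominator, and $s_T(z)$ itself could in principle be small or even zero at $z$ even when $s_T(z_i)$ is large, so I cannot just divide. The fix is to absorb everything relative to $s_T(z_i)$: write $N(z) = \frac{1}{s_T(z)}\sum_t \ker_\gamma(z_t,z)\eta_t$ and compare the unnormalized sums $\sum_t \ker_\gamma(z_t,z)\eta_t$ and $\sum_t \ker_\gamma(z_t,z_i)\eta_t$, then handle the normalization by noting $|s_T(z) - s_T(z_i)| \le T\frac{L_\ker}{\gamma}\rho(z,z_i)$ and expressing the difference of ratios accordingly; a bit of care with cross terms gives the claimed bound with the factor $2$. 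I expect no subtlety in the probabilistic part beyond the union-bound rescaling, so the bulk of the remaining effort is this deterministic Lipschitz estimate on the Nadarya–Watson weights.
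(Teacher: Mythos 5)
Your proposal follows essentially the same route as the paper: bound the approximation term by $\gamma L_h$, control the noise term at the reference points $z_i$ via Lemma~\ref{lem:regression_error} with a union bound over $H$ points (hence $\delta/H$ in the logarithm), and transfer to arbitrary $z$ through the Lipschitz continuity of the kernel weights, splitting the numerator change ($\le T\frac{L_\ker}{\gamma}\rho(z,z_i)$ in aggregate) from the normalization change $|s_T(z)-s_T(z_i)|\le T\frac{L_\ker}{\gamma}\rho(z,z_i)$, which is exactly where the paper's factor of $2$ in $\frac{2\sigma_\eta T}{s_T(z_i)}\frac{L_\ker}{\gamma}\rho(z,z_i)$ arises. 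The deterministic weight-comparison step you defer to ``routine algebra'' is precisely the computation carried out in the paper, so your plan is correct and matches the paper's proof.
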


\begin{proof}

To derive a statement which holds for uncountably many points, we adapt the proof of Lemma~\ref{lem:regression_error}.
Previously, we used the fact that the error can be bounded by an approximation term and a noise term. Now, we additionally consider a smoothness term so that for any $1\leq i\leq H$,
\begin{align*}
\|\widehat h(z) - h(z)\|_\infty
&\leq \Big\| \sum_{t=0}^T \frac{\ker_\gamma(z_t,z)}{s_T(z)}(Cx_t - Cx)\Big\|_\infty +  \Big\|\sum_{t=0}^T \frac{\ker_\gamma(z_t,z_i)}{s_T(z_i)} \eta_t \Big\|_\infty\\ 
&\qquad\qquad\qquad\qquad\qquad\qquad + \Big\|\sum_{t=0}^T \frac{\ker_\gamma(z_t,z)}{s_T(z)} \eta_t- \frac{\ker_\gamma(z_t,z_i)}{s_T(z_i)} \eta_t \Big\|_\infty&\\
&\leq \gamma L_h +  \frac{\sigma_\eta}{\sqrt{s_T(z_i)}} \sqrt{\log\left(p^2H\sqrt{s_T(z_i)}/\delta\right)} + \Big\|\sum_{t=0}^T \frac{\ker_\gamma(z_t,z)}{s_T(z)} \eta_t- \frac{\ker_\gamma(z_t,z_i)}{s_T(z_i)} \eta_t \Big\|_\infty\:.
\end{align*}
The final line holds for all $z_i$ with probability at least $1-H\cdot \delta/H$, following the logic of the proof of Lemma~\ref{lem:regression_error} along with a union bound.
Now, consider the third term and the boundedness of the noise:
\begin{align*}
\Big\|\sum_{t=0}^T \frac{\ker_\gamma(z_t,z)}{s_T(z)} \eta_t- \frac{\ker_\gamma(z_t,z_i)}{s_T(z_i)} \eta_t \Big\|_\infty 
&\leq \sigma_\eta \sum_{t=0}^T \Big|\frac{\ker_\gamma(z_t,z)}{s_T(z)}- \frac{\ker_\gamma(z_t,z_i)}{s_T(z_i)} \Big|\:.
\end{align*}
Using the fact that each term in the expression is nonnegative,
\begin{align*}
\Big|\frac{\ker_\gamma(z_t,z)}{s_T(z)}- \frac{\ker_\gamma(z_t,z_i)}{s_T(z_i)} \Big| &= 
\frac{1}{s_T(z_i)}\Big| \frac{s_T(z_i)}{s_T(z)}\ker_\gamma(z_t,z) - \ker_\gamma(z_t,z_i) \Big| \\
&\leq 
\frac{1}{s_T(z_i)}\left(\left| \ker_\gamma(z_t,z) - \ker_\gamma(z_t,z_i) \right| + \left| \tfrac{s_T(z_i)}{s_T(z)}-1\right | \ker_\gamma(z_t,z)  \right)
\end{align*}
Using the smoothness of the kernel, we have that
\begin{align*}
|\ker_\gamma(z_t,z)-\ker_\gamma(z_t,z_i)| &= 
\Big|\ker\left(\tfrac{\rho(z_t,z)}{\gamma}\right)-\ker\left(\tfrac{\rho(z_t,z_i)}{\gamma}\right)\Big| \leq L_\ker \Big|\tfrac{\rho(z_t,z) - \rho(z_t,z_i)}{\gamma}\Big|\leq \frac{L_\ker}{\gamma} \rho(z,z_i)
\end{align*}
where the final inequality uses the fact that $\rho$ is a metric.
Additionally, we have that
\begin{align*}
\left|1-\tfrac{s_T(z_i)}{s_T(z)}\right| &= \frac{1}{s_T(z)}\left|{s_T(z)}-{s_T(z_i)}\right|\\
& \leq \frac{1}{s_T(z)} \sum_{t=0}^T|\ker_\gamma(z_t,z)-\ker_\gamma(z_t,z_i)| \\
&\leq\frac{1}{s_T(z)} \frac{L_\ker}{\gamma} T \rho(z,z_i)
\end{align*}
Therefore,
\begin{align*}
\sum_{t=0}^T\Big|\frac{\ker_\gamma(z_t,z)}{s_T(z)}- \frac{\ker_\gamma(z_t,z_i)}{s_T(z_i)} \Big| 
&\leq \sum_{t=0}^T
\frac{1}{s_T(z_i)}\left(\frac{L_\ker}{\gamma}  + \frac{1}{s_T(z)} \frac{L_\ker}{\gamma} T\ker_\gamma(z_t,z_i) \right)\rho(z,z) \\
&= 
\frac{1}{s_T(z_i)}\left(T\frac{L_\ker}{\gamma}  + \frac{\sum_{t=0}^T \ker_\gamma(z_t,z) }{s_T(z)} \frac{L_\ker}{\gamma} T \right)\rho(z,z_i) \\
&= 
\frac{2T}{s_T(z_i)}\frac{L_\ker}{\gamma} \rho(z,z_i) 
\end{align*}
The result follows.
\end{proof}

\vspace{1em}
\begin{proof}[Proof of Lemma~\ref{lem:sT_bound_concentration}]
We begin by establishing some properties of the training data generated by Algorithm~\ref{alg:ref_inputs}.
\begin{align*}
h(z_t)=h(z_{n,\ell}) =C x_{n,\ell} &= {\sum_{k=1}^{n} C \Phi_{xu}(k) u^\reff_{n-k, \ell}} + \Big({C \Phi_x(n)x_{0,\ell} + \sum_{k=1}^{n} C \Phi_{xn}(k) \eta_{n-k, \ell}}\Big)\\
& = \begin{bmatrix} C\Phi_{xu}(1) &\dots& C\Phi_{xu}(n)\end{bmatrix} \begin{bmatrix} C\Phi_{xu}(1) &\dots& C\Phi_{xu}(n)\end{bmatrix}^\dagger y^\reff_\ell+{w_\ell}\\
&
= y^\reff_\ell+{w_\ell}
\end{align*}
where in the second line, we use the fact that the pair $(A, B)$ is controllable implies that the matrix is full column rank.
In this expression, $y_\ell^\reff$ is sampled by Algorithm~\ref{alg:ref_inputs} uniformly from $\{y\mid \|y\|_\infty \leq \bar r\}$ and
\[\|w_\ell\|_\infty \leq \|C \Phi_x(n)\|_\infty \|x_0\|_\infty + \sum_{\ell=1}^{n} \|C \Phi_{xn}(\ell)\|_\infty  \|\eta_{n-\ell}\|_\infty \leq M \sigma \sum_{\ell=0}^{n} \rho^\ell \leq \frac{M\sigma}{1-\rho}\:,\]
where we let $\sigma=\max\{\sigma_0,\sigma_\eta\}$

Now, we reason about the density of the training data over the measurement subspace.
Above, we see that $Cx_{n,\ell}$ are generated as the sum of two independent random variables.
The first is a uniform random variable which has density $f_{\mathrm{Unif}}(y) = (2\bar r)^{-p}\mathbf{1}\{\|y\|_\infty \leq \bar r\}$.
The second is a bounded random variable, so
the support of its density $f_w(y)$ is contained within the set $\{\|y\|_\infty \leq  \frac{M\sigma}{1-\rho}\}$.

We now upper and lower bound the density of each $Cx_{n,\ell}$, which we denote as $f$. 
\begin{align*}
f(y) &= \int_{-\infty}^\infty (2\bar r)^{-p}\mathbf{1}\{\|u\|_\infty \leq \bar r\} f_w(u-y)du\\
& = (2\bar r)^{-p}\int_{\|u+y\|_\infty \leq \bar r} f_w(u)du \\
&\leq (2\bar r)^{-p} \:.
\end{align*}
\begin{align*}
f(y) & = (2\bar r)^{-p}\int_{\|u+y\|_\infty \leq \bar r} f_w(u)du \\
&\geq (2\bar r)^{-p} \mathbf{1}\{\|y\|_\infty \leq \bar r - \tfrac{M\sigma}{1-\rho}\} \int_{\|u\|\leq \frac{M\sigma}{1-\rho}} f_w(u)du\\
& = (2\bar r)^{-p} \mathbf{1}\{\|y\|_\infty \leq \bar r - \tfrac{M\sigma}{1-\rho}\}\:.
\end{align*}
The inequality follows by noting that  $\{\|u\|_\infty\leq  \frac{M\sigma}{1-\rho}\} \subseteq \{\|u+y\|_\infty \leq \bar r\}$ whenever $\|y\|_\infty \leq \bar r - \frac{M\sigma}{1-\rho}$.

We now turn our attention to finding a lower bound on $s_T(z) = \sum_{t=1}^T \ker_\gamma(z, g(Cx_t))$.
We will use Bennett's inequality~\citep{bennett1962probability}, which requires computing the expectation and second moment of each $\ker_\gamma(z, g(Cx_t))$.
We begin by lower bounding the expected value.
\begin{align*}
\E_{Cx_t} [\ker_\gamma(z, g(Cx_t))]
&= \int K\left(\tfrac{\rho(g(y), g(u))}{\gamma}\right) f(h(u)) du\\
& \labelrel\geq{lem2:density} (2\bar r)^{-p}\int_{ \{\|u\|_\infty \leq \bar r - \frac{M\sigma}{1-\rho}\}} \ker\left(\tfrac{\rho(z, g(u))}{\gamma}\right) d u\\
& \labelrel\geq{lem2:smoothness} (2\bar r)^{-p}\int_{\{\|u\|_\infty \leq \bar r - \frac{M\sigma}{1-\rho}\}} \ker\left(\tfrac{L_g\|h(z)-u\|_\infty}{\gamma}\right) d u\\
& \labelrel={lem2:inclusion} (2\bar r)^{-p}\int  \ker\left(\tfrac{L_g\|h(z)-u\|_\infty}{\gamma}\right) d u\\
& \labelrel={lem2:changevar} \frac{\gamma^p}{(2\bar rL_g)^p}\int   \ker\left(\|u\|_\infty\right) d u\\
& \labelrel={lem2:Vker}\left(\frac{\gamma }{2\bar rL_g} \right)^p 2^p V_\ker
\end{align*}
where \eqref{lem2:density} follows by the lower bound on $f(y)$, \eqref{lem2:smoothness} follows by the smoothness of $g$, 
\eqref{lem2:inclusion} holds as long as $\|h(z)\|_\infty \leq \bar r - \frac{M\sigma}{1-\rho} - \tfrac{\gamma}{L_g}$ which is guaranteed for $\|h(z)\|_\infty\leq r$ by assumption,
\eqref{lem2:changevar} is a result of a change of variables, and 
\eqref{lem2:Vker} follows by the assumption on $\ker$.
Turning to the second moment,
\begin{align*}
\E_{Cx_t} [\ker_\gamma(z, g(Cx_t))^2]
&= \int \ker\left(\tfrac{\rho(g(y), g(u))}{\gamma}\right)^2 f(h(u)) du\\
& \labelrel\leq{lem2:density_up} 
(2\bar r)^{-p}\int  \ker\left(\tfrac{\rho(z, g(u))}{\gamma}\right)^2 d u\\
& \labelrel\leq{lem2:sq_smooth} 
(2\bar r)^{-p}\int  \ker\left(\tfrac{\|h(z)-u\|_\infty}{L_h \gamma}\right) d u\\
& \labelrel={lem2:change_var_Lh} 
\left(\frac{\gamma L_h}{2\bar r}\right)^{p}\int  \ker\left(\|u\|_\infty \right) d u = \left(\frac{\gamma L_h}{\bar r}\right)^{p}V_\ker
\end{align*}
where~\eqref{lem2:density_up} follows by the upper bound on $f(y)$, ~\eqref{lem2:sq_smooth} follows by the smoothness of $h$ and the fact that $\ker\in [0,1]$, and~\eqref{lem2:change_var_Lh} by a change of variables.

We now bound the deviation
\begin{align*}
P\left( s_T(z) \leq  T\left(\tfrac{\gamma }{\bar rL_g} \right)^p V_\ker   -\eps\right)
&=P\left( \sum_{t=1}^T \left(\left(\tfrac{\gamma }{\bar rL_g} \right)^p V_\ker- \ker_\gamma(z,z_t) \right)\geq    \eps\right)\\
&\labelrel\leq{lem2:Exp_bd} P\left( \sum_{t=1}^T \E[\ker_\gamma(z,z_t)]- \ker_\gamma(z,z_t) \geq    \eps\right)\\
&\labelrel\leq{lem2:Bennet}
e^{-\eps^2/(2T(\frac{\gamma L_h}{\bar r})^{p}V_\ker)}
\end{align*}
where~\eqref{lem2:Exp_bd} follows by our lower bound on the expectation and~\eqref{lem2:Bennet} follows by a one-side version of Bennet's inequality for nonnegative random variables, which uses the fact that $\ker_\gamma(z,z_t) \perp \ker_\gamma(z,z_k)$ for $k\neq t$.

Therefore, with probability at least $1-\delta$,
\begin{align*} 
 s_T(z) &\geq T V_\ker \left(\frac{\gamma}{\bar rL_g} \right)^p - \sqrt{2V_\ker T\log(1/\delta)}\left(\frac{\gamma L_h}{\bar r}\right)^{\frac{p}{2}}\\
 &\geq \frac{1}{2}\sqrt{T V_\ker} \left(\frac{\gamma}{\bar r L_g } \right)^{\frac{p}{2}}
\end{align*}
where the second inequality holds
under the assumption that
$T  \geq 8V_\ker^{-1}\log(1/\delta) \left(\frac{\bar rL_h L_g^2}{\gamma} \right)^{p}$.
\end{proof}

\vspace{1em}
We are now ready to show the main result by combining Lemma~\ref{lem:regression_error_all_pts} with Lemma~\ref{lem:sT_bound_concentration}.

\begin{proof}[Proof of Theorem~\ref{thm:uniform_convergence}]
We begin by specifying the set of $\{z_i\}$ to be used in the application of Lemma~\ref{lem:regression_error_all_pts}.
By selecting $z_i = g(y_i)$ for $\{y_i\}_{i=1}^H$ defined as an $\eps$ grid of $\{y\mid\|y\|_\infty\leq r\}$, we have that for any $z$, $\min_i\rho(z,z_i) \leq \min_iL_h\|h(z)-y_i\|_\infty \leq L_h\eps$. 
Then notice that an $\eps$ grid requires $H \geq \left(2r/\eps\right)^p$ points, or rearranging, that $\eps\leq 2rH^{-1/p}$.
Therefore, we have that with probability at least $1-\delta/2$
\begin{align}
\begin{split}
\label{eq:lem_err_bd}
\|\widehat h(z) - h(z)\|_\infty \leq &\gamma L_h +  \frac{\sigma_\eta}{\sqrt{ s_T(z_{i_\star})}} \sqrt{\log\left(2p^2H\sqrt{T}/\delta\right)} 
+ 4\sigma_\eta  \frac{T}{ s_T(z_{i_\star})}\frac{L_\ker}{\gamma} L_h rH^{-1/p}
\:,
\end{split}
\end{align}
where $i_\star$ is the index of the closest element of $\{z_i\}_{i=1}^H$ to $z$.

Next, we use Lemma~\ref{lem:sT_bound_concentration} to show that each $s_T(z_{i_\star})$ is bounded below.
First, notice that for $\bar r = \sqrt{2}r$ the condition in Lemma~\ref{lem:sT_bound_concentration} is met by the assumption that
$\gamma\leq L_g((\sqrt{2}-1)r - \frac{M\sigma}{1-\rho})$.
Therefore for any $1\leq i\leq H$, with probability at least $1-H\cdot \delta/(2H)$, as long as $T  \geq 8V_\ker^{-1}\log(2H/\delta) \left(\frac{\bar rL_h L_g^2}{\gamma} \right)^{p}$,
\begin{align}
\label{eq:lem_sT_bd}
s_T(z_i) \geq \frac{1}{2}\sqrt{T V_\ker} \left(\frac{\gamma}{\bar r L_g } \right)^{\frac{p}{2}}=: \underline s_T \gamma^{\frac{p}{2}}\:.
\end{align}
We can therefore select $H$ to balance terms, with
\begin{align}
\begin{split}
\   H^{1/p}
=\frac{4 T L_\ker L_h r}{\sqrt{\underline s_T}\gamma^{\frac{p+4}{4}}} =\frac{4 \sqrt{2}T^{3/4} L_\ker L_h r^{\frac{p+4}{4}}  L_g^{\frac{p}{4}}}{ V_\ker^{1/4} \gamma^{\frac{p+4}{4}}} \:.
\end{split}
\end{align}
For this choice of $H$, we have by union bound that with probability at least $1-\delta$,
\begin{align}
\begin{split}\label{eq:eps_h_bound_unsimplified}
\|\widehat h(z) - h(z)\|_\infty \leq &\gamma L_h +  \frac{\sigma_\eta}{\sqrt{\underline s_T}\gamma^{p/4}} \left(\sqrt{\log\left(2p^2H \sqrt{T}/\delta\right)} +1\right)
\:.
\end{split}
\end{align}
Bounding the logarithmic term,
\begin{align*}
{\log\left(2p^2H \sqrt{T}/\delta\right)} &= 
p  {\log\left(2p^{2/p} \left(
\frac{4 \sqrt{2}T^{3/4} L_\ker L_h r^{\frac{p+4}{4}}  L_g^{\frac{p}{4}}}{ V_\ker^{1/4} \gamma^{\frac{p+4}{4}}}
\right)  
T^{1/2p}/\delta^{1/p}\right)}
\\
&\leq 
{p}  {\log\left(24  V_\ker^{-\frac{1}{4}} L_\ker L_h L_g ^{\frac{p}{4}}  \left(\tfrac{ r}{\gamma}\right)^{\frac{p+4}{4}}  T^{\frac{3p+2}{4p}} /\delta^{\frac{1}{p}} \right)}  \\
&\leq 
{p} {\log\left( T^2/\delta  \right)} 
\:.
\end{align*}
where the first inequality follows because $p^{2/p}\leq 3/\sqrt{2}$ and the final inequality follows due to the assumption that 
$(24   L_\ker L_h)^{\frac{4}{3}} V_\ker^{-\frac{1}{3}} \left(\tfrac{r}{\gamma}\right)^{\frac{p+4}{3}}  L_g ^{\frac{p}{3}} \leq T$.

The result follows. It only remains to note that the condition on $T$ in Lemma~\ref{lem:sT_bound_concentration} is satisfied due to the assumption that
 \[T  \geq  8V_\ker^{-1}p\log(  T^2/\delta ) \left(\frac{\bar rL_h L_g^2}{\gamma} \right)^{p}\]
by noting that $\log(2H/\delta)\leq 
{\log(2p^2H \sqrt{T}/\delta)}
 \leq 
 {p\log(  T^2/\delta )}$.
\end{proof}

\ifpreprint
\vspace{1em}
\else
\subsection{Performance Results}

\begin{proof}[Proof of Proposition~\ref{prop:subopt}]
Recall that we wish to bound the difference in cost between $\widehat \pi = \tf K(\widehat h(\cdot),\cdot)$ and $\pi_\star = \tf K(h(\cdot),\cdot)$.
The cost is given by
\[\sup_{\tf r\in \calR}
\left\|\begin{bmatrix}Q^{1/2} (\tf x -\tf r) \\ R^{1/2}\tf u\end{bmatrix}\right\|_\infty \]
where $\tf x$ and $\tf u$ are the closed-loop trajectories induced by the control policy.

Since both policies use the same linear control law $\tf K$, they induce the same system response $\rowtf$.
The difference comes from the perception signal.
The certainty equivalent controller $\widehat \pi$ assigns inputs as
\[\tf u = \tf K_y \widehat h(\tf z) + \tf K_r \tf r= \tf K_y C\tf x +  \tf K_y( \underbrace{\widehat h(\tf z) - h(\tf z)}_{\tf n} ) + \tf K_r \tf r\:.\]
While for the optimal controller, $\tf u = \tf K_y C\tf x + \tf K_r \tf r$.

Using the system response variables, we can write the cost as
\begin{align*}
c(\widehat \pi) &= \sup_{\tf r\in \calR}
\left\|\begin{bmatrix}Q^{1/2} (\tf x -\tf r) \\ R^{1/2}\tf u\end{bmatrix}\right\|_\infty \\
&=
\sup_{\tf r\in \calR} \left\|\begin{bmatrix}Q^{1/2} & \\ & R^{1/2}\end{bmatrix} 
\begin{bmatrix}\tfx &\tfxr-I & \tfxn  \\ \tf K_y C\tfx & \tfur & \tfun \end{bmatrix} 
\begin{bmatrix}x_0 \\ \tf r\\ \tf n\end{bmatrix}\right\|_\infty\\
&\leq
\sup_{\tf r\in \calR}\left\|\begin{bmatrix}Q^{1/2} & \\ & R^{1/2}\end{bmatrix} \begin{bmatrix}\tfx & \tfxr-I   \\ \tf K_y C\tfx & \tfur \end{bmatrix} \begin{bmatrix}  x_0 \\ \tf r\end{bmatrix}\right\|_\infty
+\left\|\begin{bmatrix}Q^{1/2} & \\ & R^{1/2}\end{bmatrix} \begin{bmatrix} \tfxn  \\ \tfun \end{bmatrix}  \tf n\right\|_\infty\\
&=
c(\pi_\star)+\left\|\begin{bmatrix}Q^{1/2} & \\ & R^{1/2}\end{bmatrix} \begin{bmatrix} \tfxn  \\ \tfun \end{bmatrix}  \tf n\right\|_\infty\\
&\leq
c(\pi_\star)+\left\|\begin{bmatrix}Q^{1/2} & \\ & R^{1/2}\end{bmatrix} \begin{bmatrix} \tfxn  \\ \tfun \end{bmatrix}  \right\|_\lone \| \widehat h(\tf z) - h(\tf z)\|_\infty
\end{align*}

Recall the uniform error bound on the learned perception map.
As long as $\|C\tf x\|_\infty \leq r$,
\[\| \widehat h(\tf z) - h(\tf z)\|_\infty \leq \varepsilon_h\:.\]
Notice that we have 
\[\|C\tf x\|_\infty \leq \sup_{\tf r\in\mathcal R}\|C\tfxr\tf r + C\tfx x_0\|_\infty + \varepsilon_h \|C\tfxn \|_\lone\leq r_{\max}(\tf\Phi) + \varepsilon_h \|C\tfxn \|_\lone\:.\]
Therefore, the result follows as long as $ \varepsilon_h  \leq \frac{r-r_{\max}(\tf\Phi)}{\|C\tfxn \|_\lone}$.
\end{proof}
\fi

\begin{proof}[Proof of Corollary~\ref{coro:main_result}]
To show this result, we combine the expressions in Theorem~\ref{thm:uniform_convergence} and Proposition~\ref{prop:subopt} for a carefully chosen value of $\gamma$.
To balance the terms, we set (recalling that $\bar r = \sqrt{2}r = 2r_{\max}(\tf \Phi)$)
\[\gamma^{(p+4)/4} 
= \frac{p^{1/2}\sigma_\eta}{ L_h\sqrt{\underline s_T}} 
= 
\frac{\sqrt{2} p^{1/2}\sigma_\eta (2r_{\max} L_g)^{\frac{p}{4}}}{ L_h (T V_\ker)^{\frac{1}{4}}
} 
\] 
Returning to the unsimplified bound~\eqref{eq:eps_h_bound_unsimplified}, this choice of $\gamma$ results in
\begin{align}
\begin{split}
\|\widehat h(z) - h(z)\|_\infty \leq & (2r_{\max}(\tf \Phi) L_gL_h)^{\frac{p}{p+4}} \left(\frac{4p^2\sigma_\eta^4}{ T }\right)^{\frac{1}{p+4}}  \left(\sqrt{\frac{1}{p}\log\left(2p^2H \sqrt{T}/\delta\right)} +\frac{1}{\sqrt{p}} + 1\right)
\:.
\end{split}
\end{align}
We begin by simplifying the first term. Note that since $h\circ g$ is an identity map,  $L_hL_g \geq 1$, so
$(2r_{\max}(\tf \Phi) L_gL_h)^{\frac{p}{p+4}}\leq 2r_{\max}(\tf \Phi) L_gL_h$ since $r_{\max}(\tf\Phi)\geq 1/2$ by assumption.

Next, considering the the logarithmic term:
\begin{align*}
{\log\left(2p^2H \sqrt{T}/\delta\right)} &= 
  p{\log
  \left(2p^{2/p} 
  \left(\frac{4T L_\ker L_h (2r_{\max}(\tf \Phi))}{\sqrt{\underline s_T}\frac{p^{1/2}\sigma_\eta}{ L_h\sqrt{\underline s_T}} 
  } \right)
    T^{1/2p}/\delta^{1/p}\right)}\\
&= p{\log\left(2p^{2/p-1/2} \left(\frac{8L_\ker L_h^2 r_{\max}(\tf \Phi) }{\sigma_\eta }\right)  T^{1+1/2p}/\delta^{1/p}\right)}\\
&\leq 
 p{\log\left( 3\cdot 8 L_\ker L_h^2 r_{\max}(\tf \Phi) \sigma_\eta ^{-1}  T^{3/2}/\delta^{1/p}\right)}\\
 &\leq 
 p{\log\left(  T^2/\delta \right)}
\end{align*}
where the first inequality follows because $p^{2/p-1/2}\leq 1.5$ and the final inequality is true as long as $(24 L_\ker L_h^2 r_{\max}(\tf \Phi) \sigma_\eta^{-1})^2 \leq T$.
Further simplifying,
\begin{align*}
\sqrt{\frac{1}{p}\log\left(2p^2H \sqrt{T}/\delta\right)} +\frac{1}{\sqrt{p}} + 1 \leq
\sqrt{\log\left(  T^2/\delta \right)} +\frac{1}{\sqrt{p}} + 1
 \leq 2\sqrt{\log\left(  T^2/\delta \right)}\:.
\end{align*}
The resulting error bound is
\begin{align}\label{eq:epsh_def}
\|\widehat h(z) - h(z)\|_\infty\leq  4L_gL_h r_{\max}(\tf\Phi)  \left(\frac{4p^2\sigma^4 }{T}\right)^{\frac{1}{p+4}} \sqrt{\log(T^2/\delta)} =: \varepsilon_h\:.
\end{align}

Then the result follows by Proposition~\ref{prop:subopt} as long as
 $T\geq \max\{T_1, T_2, T_3, T_4\}$,
where
we ensure that the simplification in the logarithm above is valid with 
\[T_1 = (24 L_\ker L_h^2 r_{\max}(\tf \Phi) \sigma_\eta^{-1})^2, \]
we ensure that $\gamma\leq L_g((2-\sqrt{2})r_{\max}(\tf\Phi) - \frac{M\sigma}{1-\rho}) $ (Theorem~\ref{thm:uniform_convergence}) with
\[T_2 = \frac{(4p\sigma_\eta^2)^2 (2r_{\max}(\tf\Phi))^p L_g^4}{V_\ker ((2-\sqrt{2})r_{\max}(\tf\Phi) - M\frac{\sigma}{1-\rho})^{p+4}L_h^4},\]
we ensure 
	$T  \geq 8V_\ker^{-1}\log(2H/\delta) \left(\tfrac{\bar rL_h L_g^2}{\gamma} \right)^{p}$ (Theorem~\ref{thm:uniform_convergence}'s usage of Lemma~\ref{lem:sT_bound_concentration}) with
	\[T_3 = V_\ker \left(8(L_g L_h)^p{p\log(  T^2/\delta )}\right)^{\frac{p+4}{4}} 
	\left(
	\frac{ L_g L_h r_{\max}(\tf \Phi) }
	{\sqrt{2} p^{1/2}\sigma_\eta }
	\right)^p,\]
and finally we ensure that  $\eps_h\leq \frac{(\sqrt{2}-1) r_{\max}(\tf\Phi)}{\|C\tfxn \|_\lone}$ (Proposition~\ref{prop:subopt}) with
\begin{align*}
   T_4 = 4p^2\sigma_\eta^4 \left(10L_gL_h\|C\tfxn \|_\lone\sqrt{\log(T^2/\delta)}\right)^{p+4 }\:.
\end{align*}
For large enough values of $T$, $T_4$ will dominate.

\end{proof}
 
\end{document}